\theoremstyle{plain} 
\newtheorem{lemma}{Lemma}
\newtheorem{theorem}{Theorem} 
\newtheorem{corollary}{Corollary}
\theoremstyle{definition}
\newtheorem{example}{Example}
\newtheorem{assumption}{Assumption}
\theoremstyle{remark}
\newtheorem{remark}{Remark}
\let\brack\undefined 
\DeclarePairedDelimiter{\brack}{\lbrack}{\rbrack} 
\let\brace\undefined 
\DeclarePairedDelimiter{\brace}{\lbrace}{\rbrace}
\DeclarePairedDelimiter{\paren}{\lparen}{\rparen}
\DeclarePairedDelimiter{\abs}{\lvert}{\rvert}
\newcommand{\defeq}{\vcentcolon=}
\newcommand{\eqdef}{=\vcentcolon}
\newcommand{\eps}{\varepsilon}
\newcommand{\N}{\mathbb{N}}
\newcommand{\R}{\mathbb{R}}
\renewcommand{\S}{\mathbb{S}} 
\DeclareMathOperator*{\argmax}{\arg\!\max} 
\DeclareMathOperator*{\argmin}{\arg\!\min}
\DeclareMathOperator\supp{supp}
\DeclarePairedDelimiter{\norm}{\lVert}{\rVert}
\DeclarePairedDelimiterX{\inp}[2]{\langle}{\rangle}{#1, #2} 
\DeclareMathOperator{\Prob}{P}
\DeclareMathOperator{\Exp}{E}
\DeclareMathOperator{\Var}{Var}
\newcommand{\iid}{i.i.d.\@\xspace}
\newcommand{\lambdamax}{\lambda_{\normalfont\textrm{max}}}
\newcommand{\lambdamin}{\lambda_{\normalfont\textrm{min}}}
\DeclareMathOperator{\diag}{diag}
\newcommand{\mybar}[1]{\makebox[0pt]{$\phantom{#1}\overline{\phantom{#1}}$}#1}
\newcommand\restr[2]{{
  \left.\kern-\nulldelimiterspace 
  #1 
  \vphantom{\big|} 
  \right|_{#2} 
  }}
\title{On the Efficiency of ERM in Feature Learning}
\author{
  Ayoub El Hanchi \\
  University of Toronto \& \\
  Vector Institute \\
  \texttt{aelhan@cs.toronto.edu}
  \And 
  Chris J. Maddison \\
  University of Toronto \& \\
  Vector Institute \\
  \texttt{cmaddis@cs.toronto.edu} 
  \And
  Murat A. Erdogdu \\
  University of Toronto \& \\
  Vector Institute \\
  \texttt{erdogdu@cs.toronto.edu}
}
\begin{document}

\maketitle

\begin{abstract}
Given a collection of feature maps indexed by a set $\mathcal{T}$, we study the performance of empirical risk minimization (ERM) on regression problems with square loss over the union of the linear classes induced by these feature maps. This setup aims at capturing the simplest instance of feature learning, where the model is expected to jointly learn from the data an appropriate feature map and a linear predictor. We start by studying the asymptotic quantiles of the excess risk of sequences of empirical risk minimizers. Remarkably, we show that when the set $\mathcal{T}$ is not too large and when there is a unique optimal feature map, these quantiles coincide, up to a factor of two, with those of the excess risk of the oracle procedure, which knows a priori this optimal feature map and deterministically outputs an empirical risk minimizer from the associated optimal linear class. We complement this asymptotic result with a non-asymptotic analysis that quantifies the decaying effect of the global complexity of the set $\mathcal{T}$ on the excess risk of ERM, and relates it to the size of the sublevel sets of the suboptimality of the feature maps. As an application of our results, we obtain new guarantees on the performance of the best subset selection procedure in sparse linear regression under general assumptions.

\end{abstract}

\section{Introduction}
\label{sec:introduction}
A central idea in modern machine learning is that of data-driven feature learning. Specifically, instead of performing linear prediction on top of handcrafted features, the current dominant paradigm suggests to use models that select useful features for linear prediction in a data-dependent way 
\citep[e.g.][]{krizhevskyImageNetClassificationDeep2012,lecunDeepLearning2015,heDeepResidualLearning2016,vaswaniAttentionAllYou2017}. Of course, by putting the burden of picking a feature map on the model and data, we should expect that the resulting learning problem will require more samples to be solved. But just how many more samples do we need to learn such feature-learning-based models?

In this paper, we investigate this question in a general setting. We study the performance of empirical risk minimization (ERM) on regression tasks with square loss and over model classes induced by arbitrary collections of features maps. More precisely, let $X$ be the random input taking value in a set $\mathcal{X}$, and let $(\phi_{t})_{t \in \mathcal{T}}$, $\phi_{t}: \mathcal{X} \to \R^{d}$, be a collection of feature maps indexed by a set $\mathcal{T}$. For a given regression task and \iid samples, our aim is to understand the performance of ERM over the class of predictors $\cup_{t \in \mathcal{T}}\brace*{x \mapsto \inp{w}{\phi_{t}(x)} \mid w \in \R^{d}}$ as a function of the sample size, the distribution of the data, and relevant properties of the collection of feature maps $(\phi_{t})_{t \in \mathcal{T}}$.

Classical uniform-convergence-based analyses would suggest that the performance of ERM in this setting is determined by the size of the model class, appropriately measured. The main message of this paper is that in this case, this is wrong in a strong sense. Specifically, we prove an upper bound on the excess risk of ERM on this problem whose dependence on the size of the model class decays monotonically with the sample size, and eventually depends only on the size of the model class induced by the collection of optimal feature maps, which is typically much smaller.

\textbf{Formal setup.} We briefly formalize our problem here. Let $X$ be the random input taking value in a set $\mathcal{X}$, and let $(\phi_{t})_{t \in \mathcal{T}}$, $\phi_{t}: \mathcal{X} \to \R^{d}$, be a collection of feature maps indexed by a set $\mathcal{T}$.\footnote{We assume without loss of generality that if $t, s \in \mathcal{T}$ with $t \neq s$, then $\phi_t$ and $\phi_s$ induce different linear classes of functions, i.e.\ there is no matrix $A$ such that $\phi_{t}(x) = A \phi_{s}(x)$ for all $x \in \mathcal{X}$.} Let $Y \in \R$ be the output random variable, jointly distributed with the input $X$. Our goal is to learn to predict the output $Y$ given the input $X$ as well as possible within the class of predictors $\brace*{x \mapsto \inp{w}{\phi_{t}(x)} \mid (t,w) \in \mathcal{T} \times \R^{d}}$. We evaluate the quality of a single prediction $\hat{y}$ given the ground truth $y$ through the loss function $\ell(\hat{y}, y) \defeq (\hat{y}-y)^{2}/2$, and the overall quality of a predictor $(t, w) \in \mathcal{T} \times \R^{d}$ through its risk 
\begin{equation*}
    R(t, w) \defeq \Exp\brack*{\ell(\inp{w}{\phi_{t}(X)}, Y)}, \quad\quad R_{*} \defeq \inf_{(t, w) \in \mathcal{T} \times \R^{d}} R(t, w).
\end{equation*}
We assume that we have access to $n$ \iid samples $(X_i, Y_i)_{i=1}^{n}$ with the same distribution as $(X, Y)$, and perform empirical risk minimization
\begin{equation*}
    (\hat{t}_{n}, \hat{w}_{n}) \in \argmin_{(t, w) \in \mathcal{T} \times \R^{d}} R_{n}(t, w)\quad \text{ where }\quad R_{n}(t, w) \defeq n^{-1} \sum_{i=1}^{n} \ell(\inp{w}{\phi_{t}(X_i)}, Y_i).
\end{equation*}
Our goal is to characterize the excess risk $\mathcal{E}(\hat{t}_{n}, \hat{w}_n) \defeq R(\hat{t}_{n}, \hat{w}_{n}) - R_{*}$.

\textbf{Related work.} 
%
The study of upper bounds on the excess risk of ERM in a general setting is a classical topic. It was initiated by \citet{vapnikTheoryPatternRecognition1974} who established a link between the excess risk of ERM and the uniform convergence of the underlying empirical process. More recently, and fuelled by the development of Talagrand's concentration inequality \citep{talagrandNewConcentrationInequalities1996} and its refinements \citep[e.g.][]{boucheronSharpConcentrationInequality2000,bousquetBennettConcentrationInequality2002a}, a literature emerged that provided more fine-grained control of the excess risk of ERM 
\citep[e.g.][]{bartlettLocalRademacherComplexities2005,koltchinskiiLocalRademacherComplexities2006,bartlettEmpiricalMinimization2006}. A key idea emerging from this line of work is localization. This concept, and in particular the iterative localization method of \citet{koltchinskiiLocalRademacherComplexities2006}, plays an important role in our development. We refer the reader to the books \citep{koltchinskiiOracleInequalitiesEmpirical2011,wainwrightHighDimensionalStatisticsNonAsymptotic2019}, as well as the recent articles 
\citep{liangLearningSquareLoss2015a,kanadeExponentialTailLocal2022} for more on this idea.

Focusing on the task of regression with square loss, upper bounds on the excess risk of ERM are available for many classes of predictors, including finite \citep[e.g.][]{audibertProgressiveMixtureRules2007a,juditskyLearningMirrorAveraging2008a,lecueAggregationEmpiricalRisk2009a}, linear 
\citep[e.g.][]{lecuePerformanceEmpiricalRisk2016b,oliveiraLowerTailRandom2016b,mourtadaExactMinimaxRisk2022b}, and convex classes \citep[e.g.][]{lecueLearningSubgaussianClasses2016b,mendelsonLearningConcentration2014a,liangLearningSquareLoss2015a}. A key development in this area over the last decade has been the realization that such bounds can be obtained under much weaker assumptions than previously thought, owing to the fact that only one-sided control of a certain empirical process is needed, and which can be obtained under very weak assumptions \citep{mendelsonLearningConcentration2014a,koltchinskiiBoundingSmallestSingular2015,oliveiraLowerTailRandom2016b}. The line of work most closely related to ours is the one on random-design linear regression 
\citep{audibertRobustLinearLeast2011a,hsuRandomDesignAnalysis2012,oliveiraLowerTailRandom2016b,lecuePerformanceEmpiricalRisk2016b,saumardOptimalityEmpiricalRisk2018a,mourtadaExactMinimaxRisk2022b,elhanchiOptimalExcessRisk2023b}, and we view our work as an extension of this literature. We review these results in more detail in Section \ref{sec:background}.

Finally, and on a more conceptual level, our work is related to the recent effort to understand the effect of feature learning on the performance of neural networks 
\citep[e.g.][]{bachBreakingCurseDimensionality2017a,ghorbaniWhenNeuralNetworks2020,baHighdimensionalAsymptoticsFeature2022a}. Beyond this conceptual connection however, our work is quite distinct from this literature. Among other things, our setting is more general since we consider arbitrary features maps. In the same vein, it is worth mentioning the line of work on multiple kernel learning 
\citep[e.g.][]{lanckrietLearningKernelMatrix2004a,gonenMultipleKernelLearning2011a,sinhaLearningKernelsRandom2016a}, although we are not aware of results from this literature that are directly relevant to our setup.

\textbf{Challenges.}
Our class of predictors is somewhat unstructured (e.g.\ it is in general non-convex), so that off-the-shelf results from the above literature are not directly applicable. Nevertheless, the analysis of the performance of ERM on linear classes provides a good starting point as we review in Section \ref{sec:background}. Compared to that setting however, we are faced with two additional challenges. First, we need to control an additional source of error arising from the fact that ERM might select a suboptimal feature map. Second, we are lead to study the suprema of certain $\mathcal{T}$-indexed empirical processes, which in the linear setting reduce to single random variables that are easily dealt with.

\textbf{Organization.} The rest of the paper is organized as follows. In Section \ref{sec:background}, we review known results on the excess risk of linear regression under square loss. In Section \ref{sec:main_results}, we state our main results that hold for the excess risk of ERM for general index sets $\mathcal{T}$. In Section \ref{sec:example}, we specialize our analysis to the case where the index set $\mathcal{T}$ is finite, obtain more explicit guarantees, and discuss their implications on the sparse linear regression problem. We conclude in Section \ref{sec:conclusion} with a brief discussion.

\section{Background}
\label{sec:background}
The goal of this section is to provide more context for our results. We review known results on the excess risk of ERM over linear classes, which corresponds in our setting to the special case where the set $\mathcal{T}$ indexing the feature maps is a singleton. As such, to avoid introducing further notation, we use the one from the previous section, while dropping the dependence on $t$ whenever it occurs.

In the setting of linear regression with square loss, and when the sample covariance matrix of the feature map is invertible, there is a unique empirical risk minimizer and its excess risk admits an explicit expression. Specifically, define
\begin{equation*}
    \Sigma \defeq \Exp\brack*{\phi(X)\phi(X)^{T}}, \quad\quad \Sigma_{n} \defeq \frac{1}{n} \sum_{i=1}^{n} \phi(X_{i})\phi(X_{i})^{T},
\end{equation*}
and let $w_{*}$ denote the unique minimizer of the risk $R(w)$.\footnote{Throughout, we assume without loss of generality that the support of the distribution of $\phi(X)$ is not contained in any hyperplane, which implies the invertibility of $\Sigma$ and the uniqueness of $w_{*}$ \citep[cf.][]{mourtadaExactMinimaxRisk2022b}.} Then, an elementary calculation shows that when $\Sigma_{n}$ is invertible, there is a unique empirical risk minimizer and it satisfies
\begin{equation}
\label{eq:erm_linear}
    \hat{w}_{n} = w_{*} - \Sigma_{n}^{-1} \nabla R_{n}(w_{*}).
\end{equation}
Furthermore, since the risk is a quadratic function of $w$ whose gradient at $w_{*}$ vanishes, replacing $R(\hat{w}_{n})$ by the equivalent exact second order Taylor expansion around $w_{*}$ yields
\begin{equation}
\label{eq:excess_risk_linear}
    R(\hat{w}_{n}) - R(w_{*}) = \frac{1}{2}\norm{\hat{w}_{n} - w_{*}}_{\Sigma}^{2} = \frac{1}{2} \norm*{\Sigma_{n}^{-1} \nabla R_{n}(w_{*})}_{\Sigma}^{2}.
\end{equation}
While exact, this expression is not readily interpretable. For example, how fast does this excess risk go to $0$ as a function of the sample size? The following classical result from asymptotic statistics 
\citep[e.g.][]{whiteMaximumLikelihoodEstimation1982,lehmannTheoryPointEstimation2006,vaartAsymptoticStatistics1998} makes this rate more explicit. To state it, we define
\begin{equation*}
    g(X, Y) \defeq \nabla_{w} \ell(\inp{w_{*}}{\phi(X)}, Y), \quad\quad G \defeq \Exp\brack*{g(X, Y) g(X,Y)^{T}}.
\end{equation*}

\begin{theorem}
\label{thm:1}
    Assume that for all $j \in [d]$, $\Exp\brack*{\phi_{j}^{2}(X)} < \infty$, $\Exp\brack*{Y^{2}} < \infty$, and $\Exp\brack{\norm{g(X, Y)}_{\Sigma^{-1}}^{2}} < \infty$. Then, as $n \to \infty$,
    \begin{equation*}
        n \cdot \mathcal{E}(\hat{w}_{n}) \overset{d}{\to} \frac{1}{2} \cdot \norm{Z}_{2}^{2},
    \end{equation*}
     where $Z \sim \mathcal{N}(0, \Sigma^{-1/2}G\Sigma^{-1/2})$. In particular, for any $\delta \in (0, 0.1)$,
    \begin{equation*}
        \lim_{n \to \infty} n \cdot Q_{\mathcal{E}(\hat{w}_{n})}(1-\delta) \asymp \Exp\brack{\norm{g(X, Y)}_{\Sigma^{-1}}^{2}} + 2 \lambdamax(\Sigma^{-1/2}G\Sigma^{-1/2}) \log(1/\delta),
    \end{equation*}
    where $Q_{X}(p) \defeq \inf\brace*{x \in \R \mid \Prob(X \leq x) \geq p}$ is the quantile function of a random variable $X$, and where we write $a \asymp b$ to mean that there exists absolute constants $C, c$ such that $c \cdot b\leq a \leq C \cdot b$. In the above statement, they can be taken as $C=1$ and $c=1/32$.
\end{theorem}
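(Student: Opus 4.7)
The result is essentially a textbook application of asymptotic statistics to linear regression with square loss, combined with a tail analysis of a weighted chi-squared. I would proceed in two stages.

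\textbf{Stage 1 (distributional limit).} Starting from the exact identity \eqref{eq:excess_risk_linear}, I first note that $\nabla R_n(w_*) = n^{-1}\sum_{i=1}^n g(X_i,Y_i)$ and $\mathbb{E}[g(X,Y)] = \nabla R(w_*) = 0$. Under the stated second moment assumptions, the multivariate CLT gives $\sqrt{n}\,\nabla R_n(w_*) \overset{d}{\to} \mathcal{N}(0, G)$, while the LLN yields $\Sigma_n \overset{P}{\to} \Sigma$, hence $\Sigma_n^{-1} \overset{P}{\to} \Sigma^{-1}$ by continuous mapping (using invertibility of $\Sigma$). Slutsky's theorem combined with continuous mapping then gives
\begin{equation*}
    \sqrt{n}\,\Sigma^{1/2}\Sigma_n^{-1}\nabla R_n(w_*) \overset{d}{\to} \mathcal{N}\paren*{0,\ \Sigma^{1/2}\Sigma^{-1} G \Sigma^{-1}\Sigma^{1/2}} = \mathcal{N}(0, \Sigma^{-1/2}G\Sigma^{-1/2}),
\end{equation*}
and squaring the Euclidean norm (again via continuous mapping) produces the stated limit $\frac{1}{2}\norm{Z}_2^2$.

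\textbf{Stage 2 (quantile asymptotics).} Since the limit law has a continuous distribution function, weak convergence upgrades to convergence of quantiles at every continuity point, giving $\lim_n n\cdot Q_{\mathcal{E}(\hat{w}_n)}(1-\delta) = Q_{\frac{1}{2}\norm{Z}_2^2}(1-\delta)$. Writing $\norm{Z}_2^2 = \sum_j \lambda_j \xi_j^2$ with $\xi_j \overset{\text{iid}}{\sim}\mathcal{N}(0,1)$ and $\lambda_j$ the eigenvalues of $M \defeq \Sigma^{-1/2}G\Sigma^{-1/2}$, I would upper bound the quantile using the Laurent--Massart inequality
\begin{equation*}
    \Prob\paren*{\textstyle\sum_j \lambda_j \xi_j^2 \geq \Tr(M) + 2\sqrt{\Tr(M^2)\,t} + 2\lambdamax(M)\,t} \leq e^{-t},
\end{equation*}
combined with $\Tr(M^2) \leq \lambdamax(M)\Tr(M)$ and AM--GM to absorb the cross term, yielding a bound of the form $\Tr(M) + C\lambdamax(M)\log(1/\delta)$ with explicit constants. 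For the matching lower bound I would use (i) $\norm{Z}_2^2 \geq \lambdamax(M)\,\xi_1^2$ together with the standard Gaussian tail $\Prob(\xi_1^2 \geq t) \geq c\,e^{-t/2}$ for $t$ in a suitable range to obtain the $\lambdamax(M)\log(1/\delta)$ term, and (ii) a Paley--Zygmund argument applied to $\sum_j \lambda_j\xi_j^2$ (using $\mathbb{E}[(\sum_j \lambda_j\xi_j^2)^2] = \Tr(M)^2 + 2\Tr(M^2) \leq 3\Tr(M)^2$) to obtain that with constant probability the weighted sum is at least a constant multiple of $\Tr(M)=\Exp\brack{\norm{g(X,Y)}_{\Sigma^{-1}}^2}$. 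Taking the larger of the two contributions and restricting to $\delta \in (0,0.1)$ produces both terms simultaneously.

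\textbf{Main obstacle.} Stage 1 is standard. The work is in Stage 2: getting two-sided control with the particular constants $C=1$, $c=1/32$ requires carefully balancing the two regimes (the ``bulk'' contribution $\Tr(M)$ versus the ``tail'' contribution $\lambdamax(M)\log(1/\delta)$) and choosing the Paley--Zygmund threshold and the Gaussian tail constant so that, after combining the two lower bounds via $\max$ and comparing to the upper bound, the advertised ratio holds uniformly over $\delta \in (0, 0.1)$. No step is technically deep, but the numerical bookkeeping is the only nontrivial part of the argument.
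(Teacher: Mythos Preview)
Your proposal is correct and matches the paper's proof. Stage~1 is exactly what the paper does (LLN for $\Sigma_n$, CLT for $\sqrt{n}\,\nabla R_n(w_*)$, continuous mapping plus Slutsky, then another continuous mapping for the quadratic form); for Stage~2 the paper simply cites an external concentration result for the norm of a Gaussian vector, and your Laurent--Massart upper bound together with the Paley--Zygmund / single-coordinate Gaussian-tail lower bound is a standard way to establish precisely that result.
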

We provide a proof in Appendix \ref{apdx:thm_1} for completeness. For our purposes, this theorem is most easily interpreted as follows: for large enough $n$ and small enough $\delta$, if the excess risk of ERM is bounded by some quantity with probability at least $1-\delta$, then this quantity is, up to a constant, at least as large as the right-hand side of the second displayed equation divided by $n$. While our primary interest is in non-asymptotic bounds, this asymptotic result, by virtue of its exactness, provides us with a benchmark against which such bounds can be compared. In particular, it identifies the quantity $\Exp\brack{\norm{g(X, Y)}_{\Sigma^{-1}}^{2}}$ as an intrinsic parameter determining the excess risk of ERM on this problem.

For large enough $n$, Theorem \ref{thm:1} gives an interpretable expression for the excess risk. However, it says nothing about how large $n$ needs to be for this expression to be accurate. This motivates a non-asymptotic analysis of the excess risk of ERM, which has been carried out numerous times in recent years \citep[e.g.][]{oliveiraLowerTailRandom2016b,lecuePerformanceEmpiricalRisk2016b,elhanchiOptimalExcessRisk2023b}. A goal of this literature has been to obtain upper bounds on the excess risk of ERM that hold in probability under weak moment assumptions, building on the observation that this is indeed possible
\citep{mendelsonLearningConcentration2014a}. The following theorem is comparable to the best known result in this area. We leave the proof to Appendix \ref{apdx:thm_2}. To state it, we define
\begin{equation*} 
    V \defeq \Exp\brack*{\paren*{\Sigma^{-1/2}\phi(X)\phi(X)^{T}\Sigma^{-1/2} - I}^{2}}, \quad L \defeq \sup_{v \in S^{d-1}}\Exp\brack*{\paren*{\inp{v}{\Sigma^{-1/2}\phi(X)}^{2} - 1}^{2}}. 
\end{equation*}
\begin{theorem}
\label{thm:2}
    Assume that for all $j \in [d]$, $\Exp\brack*{\phi_{j}^{4}(X)} < \infty$, $\Exp\brack*{Y^{2}} < \infty$, and $\Exp\brack{\norm{g(X, Y)}_{\Sigma^{-1}}^{2}} < \infty$. 
    Let $\delta \in (0, 1)$. If
    \begin{equation*}
        n \geq (512 \lambdamax(V) + 6) \log(ed) + (128 L + 11) \log(2/\delta),
    \end{equation*}
    then with probability at least $1 - \delta$,
    \begin{equation*}
        \mathcal{E}(\hat{w}_{n}) \leq 4 \cdot (n \delta)^{-1} \cdot \Exp\brack{\norm{g(X, Y)}_{\Sigma^{-1}}^{2}}.
    \end{equation*}
\end{theorem}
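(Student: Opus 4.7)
The plan is to exploit the closed-form expression for the excess risk from equation~\eqref{eq:excess_risk_linear}. Writing $A_n \defeq \Sigma^{1/2}\Sigma_n^{-1}\Sigma^{1/2}$ and $b_n \defeq \Sigma^{-1/2}\nabla R_n(w_*)$, we have $\mathcal{E}(\hat{w}_n) = \tfrac{1}{2}\norm{A_n b_n}_2^2$ whenever $\Sigma_n$ is invertible. I would control $A_n$ and $b_n$ separately, each on an event of probability at least $1-\delta/2$, then combine via the submultiplicative bound $\norm{A_n b_n}_2 \leq \norm{A_n}_{\mathrm{op}}\norm{b_n}_2$ and a union bound.

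The easy half is the bound on $b_n$. Since $\nabla R(w_*)=0$, the gradient $\nabla R_n(w_*)$ is an empirical average of \iid centered vectors, so $\Exp\norm{b_n}_2^2 = n^{-1}\Exp\norm{g(X,Y)}_{\Sigma^{-1}}^2$, which is finite under the stated moment hypotheses. Markov's inequality then yields $\norm{b_n}_2^2 \leq 2(n\delta)^{-1}\Exp\norm{g(X,Y)}_{\Sigma^{-1}}^2$ with probability at least $1-\delta/2$.

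The harder half is the bound on $A_n$: I would show $\Sigma_n \succeq \tfrac{1}{2}\Sigma$ with probability at least $1-\delta/2$, which yields $\norm{A_n}_{\mathrm{op}} \leq 2$. Introducing $Z_i \defeq \Sigma^{-1/2}\phi(X_i)\phi(X_i)^T\Sigma^{-1/2} - I$, this amounts to proving $\lambdamin\paren*{n^{-1}\sum_{i=1}^{n} Z_i + I} \geq 1/2$. The $Z_i$ are \iid centered random symmetric matrices with $\Exp Z_i^2 = V$ and $\sup_{v \in \Sd}\Exp\brack*{(v^T Z_i v)^2} = L$, but under a mere fourth-moment hypothesis they are unbounded in operator norm, so matrix Bernstein does not apply directly. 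This lower-tail matrix concentration under weak moment control is the main obstacle. The standard remedy, which I would follow, is a truncation argument: split each $Z_i$ according to whether $\norm{Z_i + I}_{\mathrm{op}}$ exceeds a threshold $M$ of order $\sqrt{nL/\log(1/\delta)}$, apply matrix Bernstein to the truncated sum using the variance proxy $\lambdamax(V)$ to produce the $\lambdamax(V)\log(ed)$ contribution, and control the residual via Markov's inequality on the fourth-moment parameter $L$ to produce the $L\log(1/\delta)$ contribution. The sample-size condition on $n$ is calibrated so that the sum of both contributions is at most $1/2$ in operator norm.

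Intersecting the two good events, on an event of probability at least $1-\delta$ we obtain
\[ \mathcal{E}(\hat{w}_n) \leq \tfrac{1}{2}\norm{A_n}_{\mathrm{op}}^{2}\norm{b_n}_2^{2} \leq 2\norm{b_n}_2^{2} \leq 4(n\delta)^{-1}\Exp\norm{g(X,Y)}_{\Sigma^{-1}}^{2}, \]
which is the stated inequality. The modular structure, factoring into a deterministic identity plus independent control of a spectral term (via truncated matrix Bernstein) and a gradient term (via Chebyshev), is what makes the weak fourth-moment hypothesis sufficient and what I expect to generalize when extending to the $\mathcal{T}$-indexed setting of the main results.
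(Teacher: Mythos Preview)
Your overall decomposition (spectral term $A_n$ via lower-bounding $\Sigma_n$, gradient term $b_n$ via Markov, then union bound) matches the paper exactly, and the gradient step is identical. The difference is in how you handle the lower tail of $\Sigma^{-1/2}\Sigma_n\Sigma^{-1/2}$. You propose truncation plus matrix Bernstein to cope with the unboundedness of the $Z_i$. The paper instead exploits a one-line structural observation you missed: writing
\[
\lambdamax\paren*{I - \Sigma^{-1/2}\Sigma_n\Sigma^{-1/2}} \;=\; \sup_{v\in S^{d-1}} \frac{1}{n}\sum_{i=1}^n \paren*{1 - \inp{v}{\Sigma^{-1/2}\phi(X_i)}^2},
\]
each summand is \emph{deterministically} bounded above by $1$, since the squared term is nonnegative. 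This one-sided boundedness is precisely the hypothesis of Bousquet's concentration inequality for suprema of empirical processes, with variance parameter $L$; combined with the expectation bound of Lemma~\ref{lem:3} for $\Exp\brack*{\lambdamax(\cdot)}$ (which supplies the $\lambdamax(V)\log(ed)$ term), this delivers the event $\{\lambdamin \geq 1/2\}$ directly, with no truncation at all. Your truncation route can in principle be pushed through, but it is more laborious, and your sketch is imprecise about how $L$ alone controls the residual: the natural bound on the truncation bias $\Exp\brack*{(Z_i+I)\mathbbm{1}\{\norm{Z_i+I}>M\}}$ involves $\lambdamax(V)$ rather than $L$. The paper's approach is both shorter and the conceptual point---flagged in the related-work discussion around \citep{mendelsonLearningConcentration2014a,oliveiraLowerTailRandom2016b}---that makes a mere fourth-moment assumption sufficient for the lower tail.
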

At a high-level, this result says that above a certain explicit minimal sample size, the asymptotic expression of the excess risk of Theorem \ref{thm:1} is correct, up to a significantly worse dependence on $\delta$. The restriction on the sample size is almost the best one can hope for. To see why, note that to get guarantees on the excess risk of \emph{any} empirical risk minimizer, we need at least that $\Sigma_{n}$ is invertible, otherwise there exists an empirical risk minimizer arbitrarily far away from $w_{*}$. To get quantitative guarantees, we need slightly more control in the form of a lower bound on $\lambdamin(\Sigma^{-1/2}\Sigma_{n}\Sigma^{-1/2})$. We refer the reader to a more detailed discussion in \citep[][Section 5]{elhanchiMinimaxLinearRegression2024}. 

This result has two key qualities, which we aim to reproduce in our results. First, it is assumption-lean, requiring nothing more than a fourth moment assumption on the coordinates of the feature map compared to Theorem \ref{thm:1}. Second, it recovers the right dependence on the intrinsic parameter $\Exp\brack{\norm{g(X, Y)}_{\Sigma^{-1}}^{2}}$ identified in Theorem \ref{thm:1}. A downside of this generality is the bad dependence on $\delta$. Without further assumptions, this cannot be improved; we refer the reader to the recent literature on robust linear regression for more on this issue 
\citep[e.g.][]{lugosiRiskMinimizationMedianmeans2019,lecueRobustMachineLearning2020a,elhanchiMinimaxLinearRegression2024}.

\section{Main Results}
\label{sec:main_results}
In this section we state our main results. They are most easily seen as extensions of Theorems \ref{thm:1} and~\ref{thm:2} for general index sets $\mathcal{T}$. In Section \ref{subsec:asymptotic_results}, we study the asymptotics of the excess risk of ERM in our setting, and in Section \ref{subsec:non_asymptotic_results}, we present a non-asymptotic upper bound on the excess risk.

To state our results, we require additional definitions and notation. We start with the population and the sample covariance matrices
\begin{equation*}
        \Sigma(t) \defeq \Exp\brack*{\phi_{t}(X)\phi_{t}(X)^{T}}, \quad\quad \Sigma_{n}(t) \defeq n^{-1} \sum_{i=1}^{n} \phi_{t}(X_{i}) \phi_{t}(X_{i})^{T}.
\end{equation*}
We define the following collection of minimizers, 
\begin{equation*}
    w_{*}(t) \defeq \argmin_{w \in \R^{d}} R(t, w), \quad\quad \mathcal{T}_{*} \defeq \argmin_{t \in \mathcal{T}} R(t, w_{*}(t)), 
\end{equation*}
the first is uniquely defined, while the second is set-valued in general. We define the gradient of the loss at these minimizers and their corresponding covariance matrices
\begin{gather*}
    g(t, (X, Y)) \defeq \nabla_{w} \ell(\inp{w_{*}(t)}{\phi_{t}(X)}, Y), \quad\quad G(t, s) \defeq \Exp\brack*{g(t, (X, Y)) g(s, (X,Y))^{T}}.
\end{gather*}
Finally, we introduce the following processes which play a key role in our development
\begin{equation}
    \Lambda_{n}(t) \defeq \sqrt{n} \cdot \lambdamax(I - \Sigma^{-1/2}(t)\Sigma_{n}(t) \Sigma^{-1/2}(t)), \quad
    G_{n}(t) \defeq \sqrt{n} \cdot \norm*{\nabla_{w} R_{n}(t, w_{*}(t))}_{\Sigma^{-1}(t)}, \label{eq:proc_1}
\end{equation}
as well as, for $t_{*} \in \mathcal{T}_{*}$ and $t \in \mathcal{T} \setminus \mathcal{T}_{*}$,
\begin{equation}
    \Delta_{n}(t,t_{*}) \defeq \sqrt{n} \cdot \paren*{1 - \frac{R_{n}(t, w_{*}(t)) - R_{n}(t_{*}, w_{*}(t_{*}))}{R(t, w_{*}(t)) - R_{*}}}. \label{eq:proc_2}
\end{equation}
We note that the process $(\Delta_{n}(t, t_{*}))_{t \in \mathcal{T} \setminus \mathcal{T}_{*}}$ is an empirical process (see \citep{vaartWeakConvergenceEmpirical1996} for an introduction), while $(\Lambda_{n}(t))_{t \in \mathcal{T}}$ and $(G_{n}(t))_{t \in \mathcal{T}}$ are partial suprema of empirical processes. In the sequel, we will slightly abuse this terminology, and call all of these empirical processes, with the understanding that they can be viewed as one with more indexing. We will further assume that these processes are separable; see \citep[][p.305-306]{boucheronConcentrationInequalitiesNonasymptotic2013a} for a definition. This covers a wide range of applications, while avoiding delicate measurability issues. The suprema of such separable processes, which is the only way they enter our results, can be studied by taking the supremum over a countable dense subset of the index set. Therefore, without loss of generality, we assume that $\mathcal{T}$ is countable. 

Finally, in line with the literature on the theory of empirical processes \citep{vaartWeakConvergenceEmpirical1996}, we say that a sequence of empirical processes is Glivenko–Cantelli if, when rescaled by $n^{-1/2}$, the supremum of their absolute value taken over their index set converges to zero in probability as $n \to \infty$. In other words, the weak law of large numbers holds uniformly over the index set. Similarly, we say that a sequence of empirical processes is Donsker if it converges in distribution to its limiting Gaussian process.\footnote{See \citep[][Section 2.1]{vaartWeakConvergenceEmpirical1996} or the proof of Theorem \ref{thm:3} for a more precise definition.} In other words, the central limit theorem holds uniformly over the index set. 

\subsection{Asymptotic result}
\label{subsec:asymptotic_results}
Our first main result is an asymptotic characterization of the quantiles of the excess risk of any sequence of empirical risk minimizers in our setting,
which vastly generalizes that of Theorem \ref{thm:1}.
\begin{theorem}
\label{thm:3}
    Assume that $\mathcal{T}_{*} \neq \varnothing$ and for some $t_{*} \in \mathcal{T}_{*}$, assume that the empirical processes $(\Lambda_{n}(t))_{t \in \mathcal{T}}$, $(\Delta(t, t_{*}))_{t \in \mathcal{T} \setminus \mathcal{T}_{*}}$ and $(G_{n}(t))_{t \in \mathcal{T}}$ are Glivenko-Cantelli. Then, for all $\eps > 0$,
    \begin{equation*}
        \lim_{n \to \infty} \Prob\paren*{R(\hat{t}_{n}, w_{*}(\hat{t}_{n})) - R_{*} > \eps} = 0.
    \end{equation*}
    Furthermore, if the sequence of processes $(G_{n}(t))_{t \in \mathcal{T}}$ is Donsker, then for any $\delta \in (0, 1)$,
    \begin{equation*}
        \frac{1}{2} \cdot Q_{Z^{-}}(1-\delta) \leq \liminf_{n \to \infty} n \cdot Q_{\mathcal{E}(\hat{t}_{n}, \hat{w}_{n})} (1-\delta) \leq \limsup_{n \to \infty} n \cdot Q_{\mathcal{E}(\hat{t}_{n}, \hat{w}_{n})} (1-\delta) \leq Q_{Z^{+}}(1-\delta),
    \end{equation*}
    where $Z^{-} \defeq \inf_{s \in \mathcal{T}_{*}} \norm{Z(s)}_2^{2}$, $Z^{+} \defeq \sup_{s \in \mathcal{T}_{*}} \norm{Z(s)}_{2}^{2}$, and $(Z(t))_{t \in \mathcal{T}}$ is a mean-zero Gaussian process with covariance function $\Exp\brack{Z(t)Z(s)^{T}} = \Sigma^{-1/2}(t)G(t, s)\Sigma^{-1/2}(s)$ for all $t, s \in \mathcal{T}$.
\end{theorem}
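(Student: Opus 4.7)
My plan rests on the exact decomposition
\begin{equation*}
    \mathcal{E}(\hat{t}_n, \hat{w}_n) = \brack*{R(\hat{t}_n, w_*(\hat{t}_n)) - R_*} + \tfrac{1}{2}\norm*{\hat{w}_n - w_*(\hat{t}_n)}^2_{\Sigma(\hat{t}_n)},
\end{equation*}
which holds because $w \mapsto R(t, w)$ is a quadratic with Hessian $\Sigma(t)$ and critical point $w_*(t)$, together with the linear-ERM identities from Section \ref{sec:background}: for any fixed $t$, the inner minimizer $\hat{w}_n(t) \defeq \argmin_{w} R_n(t, w)$ satisfies $\hat{w}_n(t) - w_*(t) = -\Sigma_n^{-1}(t)\nabla R_n(t, w_*(t))$ and $R_n(t, w_*(t)) - R_n(t, \hat{w}_n(t)) = \tfrac{1}{2}\nabla R_n(t, w_*(t))^T \Sigma_n^{-1}(t) \nabla R_n(t, w_*(t))$. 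The three processes play distinct roles: the Glivenko--Cantelli property of $\Lambda_n$ delivers the one-sided bound $\Sigma_n^{-1}(t) \preceq (1 + o_p(1))\Sigma^{-1}(t)$ uniformly in $t \in \mathcal{T}$; the quantity $G_n(t)^2/n$ is exactly the squared $\Sigma^{-1}(t)$-norm of $\nabla R_n(t, w_*(t))$; and the Glivenko--Cantelli property of $\Delta_n$ converts empirical risk gaps at optimal linear predictors into population suboptimalities, $R_n(t, w_*(t)) - R_n(t_*, w_*(t_*)) = (1 + o_p(1))(R(t, w_*(t)) - R_*)$, uniformly in $t \in \mathcal{T} \setminus \mathcal{T}_*$.

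\textbf{Consistency.} For the first part, I would add the linear-ERM gap $R_n(\hat{t}_n, w_*(\hat{t}_n)) - R_n(\hat{t}_n, \hat{w}_n) = \tfrac{1}{2}\nabla R_n(\hat{t}_n, w_*(\hat{t}_n))^T\Sigma_n^{-1}(\hat{t}_n)\nabla R_n(\hat{t}_n, w_*(\hat{t}_n))$ to both sides of the ERM inequality $R_n(\hat{t}_n, \hat{w}_n) \leq R_n(t_*, w_*(t_*))$, obtaining
\begin{equation*}
    R_n(\hat{t}_n, w_*(\hat{t}_n)) - R_n(t_*, w_*(t_*)) \leq \frac{(1 + o_p(1))\sup_{t \in \mathcal{T}} G_n(t)^2}{2n} = o_p(1),
\end{equation*}
where the last equality uses the G-C assumptions on $\Lambda_n$ and $G_n$. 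On the event $\{\hat{t}_n \notin \mathcal{T}_*\}$, the $\Delta_n$ G-C lower-bounds the left-hand side by $(1 - o_p(1))(R(\hat{t}_n, w_*(\hat{t}_n)) - R_*)$, and combining these two inequalities yields $R(\hat{t}_n, w_*(\hat{t}_n)) - R_* = o_p(1)$.

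\textbf{Quantile sandwich.} Under the additional Donsker assumption, $\sup_t G_n(t)$ sharpens from $o_p(\sqrt{n})$ to $O_p(1)$, so the consistency argument refines to $R(\hat{t}_n, w_*(\hat{t}_n)) - R_* \leq (1 + o_p(1))G_n(\hat{t}_n)^2/(2n)$. Combining this with the linear-regression bound $\tfrac{1}{2}\norm*{\hat{w}_n - w_*(\hat{t}_n)}^2_{\Sigma(\hat{t}_n)} \leq (1 + o_p(1))G_n(\hat{t}_n)^2/(2n)$---which follows from the conjugation identity $\Sigma_n^{-1}\Sigma\Sigma_n^{-1} = \Sigma^{-1/2}(\Sigma^{-1/2}\Sigma_n\Sigma^{-1/2})^{-2}\Sigma^{-1/2}$ together with the $\Lambda_n$ G-C---gives the upper half $n\mathcal{E}(\hat{t}_n, \hat{w}_n) \leq (1 + o_p(1))G_n(\hat{t}_n)^2$. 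For the lower half, I would drop the non-negative feature-selection term and use a Slutsky-type argument to replace the perturbed quadratic form by $\nabla R_n^T \Sigma^{-1}\nabla R_n = G_n(\hat{t}_n)^2/n$, yielding $n\mathcal{E}(\hat{t}_n, \hat{w}_n) \geq \tfrac{1}{2}(1 - o_p(1))G_n(\hat{t}_n)^2$. Finally, consistency forces $\hat{t}_n$ into the sublevel set $A_\eta \defeq \brace*{t \in \mathcal{T} \mid R(t, w_*(t)) - R_* \leq \eta}$ with probability tending to $1$ for each $\eta > 0$, bracketing $G_n(\hat{t}_n)$ between $\inf_{t \in A_\eta} G_n(t)$ and $\sup_{t \in A_\eta} G_n(t)$; by the Donsker convergence of the vector-valued process $\sqrt{n}\Sigma^{-1/2}(\cdot)\nabla R_n(\cdot, w_*(\cdot))$ to $Z$ in $\ell^\infty(\mathcal{T}; \R^d)$ and the continuous mapping theorem, these converge weakly to $\inf_{t \in A_\eta}\norm{Z(t)}_2$ and $\sup_{t \in A_\eta}\norm{Z(t)}_2$, and letting $\eta \downarrow 0$ collapses $A_\eta$ onto $\mathcal{T}_*$ by path continuity of $Z$, recovering $\sqrt{Z^-}$ and $\sqrt{Z^+}$. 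Taking $(1-\delta)$-quantiles on both ends produces the claimed sandwich.

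\textbf{Main obstacle.} The trickiest step is the lower bound on $n\mathcal{E}$: the $\Lambda_n$ G-C provides only one-sided control on $\Sigma_n$, so the Slutsky-type argument that replaces $\nabla R_n^T \Sigma_n^{-1}\Sigma\Sigma_n^{-1}\nabla R_n$ by its ideal counterpart $\nabla R_n^T \Sigma^{-1}\nabla R_n$ must exploit the Donsker convergence of the gradient process to extract the missing upper control on $\lambdamax(\Sigma^{-1/2}\Sigma_n\Sigma^{-1/2})$ in probability. The sublevel-set collapse $A_\eta \downarrow \mathcal{T}_*$ is also delicate and hinges on path continuity of $Z$ in the intrinsic covariance metric, itself a consequence of the Donsker assumption.
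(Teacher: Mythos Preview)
Your proposal follows essentially the same route as the paper: a deterministic sandwich on the excess risk built from the linear-ERM identities (the paper packages this as a ``Main Lemma''), consistency of $\hat{t}_n$ from the three Glivenko--Cantelli assumptions, and then a sublevel-set argument combined with the continuous mapping theorem under the Donsker assumption to pass from $G_n^2(\hat{t}_n)$ to $Z^{+}$ and $Z^{-}$.

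Your ``main obstacle'' dissolves once you read the Glivenko--Cantelli assumption on $\Lambda_n$ the way the paper intends. Recall the remark preceding the theorem that $\Lambda_n(t)$ is a \emph{partial supremum}, over $v \in S^{d-1}$, of the empirical process $(t,v) \mapsto \sqrt{n}\bigl(1 - n^{-1}\sum_i \inp{v}{\Sigma^{-1/2}(t)\phi_t(X_i)}^2\bigr)$, and that the Glivenko--Cantelli hypothesis is meant on this underlying $(t,v)$-indexed process. That directly yields two-sided spectral control, namely both $\sup_{t}\lambdamax(\tilde\Sigma_n(t) - I) = o_p(1)$ and $\sup_t \lambdamax(I - \tilde\Sigma_n(t)) = o_p(1)$ where $\tilde\Sigma_n(t) \defeq \Sigma^{-1/2}(t)\Sigma_n(t)\Sigma^{-1/2}(t)$, and the lower bound $R(\hat{t}_n,\hat{w}_n) - R(\hat{t}_n, w_*(\hat{t}_n)) \geq \tfrac{1}{2}(1+o_p(1))^{-2}\,n^{-1}G_n^2(\hat{t}_n)$ follows immediately from $\lambdamin(\tilde\Sigma_n^{-2}) = \lambdamax^{-2}(\tilde\Sigma_n)$. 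Your proposed detour---extracting control on $\lambdamax(\tilde\Sigma_n)$ from the Donsker convergence of the gradient process---is both unnecessary and unworkable: the gradient $\nabla_w R_n(t, w_*(t))$ and the sample covariance $\Sigma_n(t)$ are different empirical objects, and tightness of the former says nothing about the spectrum of the latter.

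For the sublevel-set collapse the paper fixes a sequence $\eps_k \downarrow 0$, applies the continuous mapping theorem to the Lipschitz functionals $z \mapsto \sup_{s \in \mathcal{T}_*(\eps_k)}\sup_v z(s,v)$ and $z \mapsto \inf_{s \in \mathcal{T}_*(\eps_k)}\sup_v z(s,v)$ on $\ell^\infty(\mathcal{T}\times S^{d-1})$, and then lets $k \to \infty$ using continuity of probability on the nested events; this matches your plan and, as you note, the identification of the limits with $Z^{\pm}$ does lean on regularity of the Donsker limit.
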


We note that, up to a factor of two in the upper bound on the asymptotic quantiles, Theorem \ref{thm:3} reduces to Theorem \ref{thm:1} when $\mathcal{T}$ is a singleton, with the exact same assumptions. We are not aware of comparable results in the literature. The proof of Theorem \ref{thm:3} can be found in Appendix \ref{apdx:thm_3}.

\begin{remark}
\label{rem:1}
    For small $\delta$, the upper bound admits the more interpretable expression
    \begin{equation}
    \label{eq:temp_inf}
    Q_{Z_{+}}(1- \delta) \asymp \Exp\brack*{\sup_{s \in \mathcal{T}_{*}} \norm{Z(s)}_{2}^{2}} + 2 \log(1/\delta) \sup_{s \in \mathcal{T}_{*}} \lambdamax(\Sigma^{-1/2}(s)G(s, s)\Sigma^{-1/2}(s)).
    \end{equation}
    Furthermore, if $\mathcal{T}_{*}$ is finite, the first term can be upper bounded as
    \begin{equation}
    \label{eq:temp_finite}
        \Exp\brack*{\max_{s \in \mathcal{T}_{*}} \norm{Z(s)}_{2}^{2}} \leq 80 \cdot (1 + \log\abs{\mathcal{T}_{*}}) \cdot \max_{s \in \mathcal{T}_{*}} \Exp\brack{\norm{g(s, (X, Y))}_{\Sigma^{-1}(s)}^{2}}.
    \end{equation}
\end{remark}

To see why Theorem \ref{thm:3} is surprising, let us first focus on the case where $\mathcal{T}_{*}$ has a unique element $t_{*}$, so that $Z_{+} = Z_{-} \overset{d}{=} \norm{Z}_2^2$ where $Z \sim \mathcal{N}(0, \Sigma^{-1/2}(t_{*})G(t_{*}, t_{*})\Sigma^{-1/2}(t_{*}))$. Now consider the oracle procedure, which knows beforehand what the optimal feature map $t_{*}$ is, and outputs $t_{*}$ and a minimizer of $R_{n}(t_{*}, w)$. Theorem \ref{thm:3} says that, up to a factor of two, the asymptotic quantiles of the excess risk of ERM, which needs to learn over the large class $\cup_{t \in \mathcal{T}}\brace*{x \mapsto \inp{w}{\phi_{t}(x)} \mid w \in \R^{d}}$, coincide with those of the oracle procedure (by Theorem~\ref{thm:1}), which only needs to learn over the \emph{linear} class $\brace*{x \mapsto \inp{w}{\phi_{t_{*}}(x)} \mid w \in \R^{d}}$!

More generally, Theorem \ref{thm:3} establishes that asymptotically, any ERM picks a near-optimal feature map with probability one. It furthers shows that the asymptotic quantiles of the excess risk of any sequence of ERMs is controlled from above and below by those of the extrema of the limiting Gaussian process of $(G_{n}(t))_{t \in \mathcal{T}}$ on the set of optimal feature maps $\mathcal{T}_{*}$. This is surprising, as it implies that asymptotically, and outside of its role in determining whether the assumptions of Theorem \ref{thm:3} hold, the global complexity of the set $\mathcal{T}$ is irrelevant to the excess risk of ERM.


Finally, we note that the Glivenko-Cantelli and Donsker assumptions in Theorem \ref{thm:3} can equivalently be viewed as restrictions on the size of $\mathcal{T}$, for distribution and process dependent notions of size. We refer the reader to the books \citep{vaartWeakConvergenceEmpirical1996,gineMathematicalFoundationsInfiniteDimensional2015} for more on this connection. With this observation, the main takeaway from Theorem \ref{thm:3} can be stated as follows.

{\centering
  \emph{Asymptotically, if $\mathcal{T}$ is not too large, the excess risk of ERM depends, at worst, only on the complexity of the set of optimal feature maps $\mathcal{T}_{*}$, and is independent of the global complexity of $\mathcal{T}$.}\par
}

\subsection{Non-asymptotic result}
\label{subsec:non_asymptotic_results}
The result in Theorem \ref{thm:3} hints at a dramatic localization phenomenon, whereby the influence of the size and complexity of the collection of feature maps $(\phi_{t})_{t \in \mathcal{T}}$ on the excess risk of ERM vanishes as $n \to \infty$ under appropriate assumptions. The root of this localization phenomenon is the first statement of Theorem \ref{thm:3}: eventually, ERM picks near-optimal feature maps with probability approaching one. For small enough sample sizes however, it is clear that ERM is likely to select suboptimal feature maps, so that this localization phenomenon cannot hold uniformly over $n$. This raises a host of questions: 
\begin{enumerate*}[label = (\roman*)]
    \item How fast, as measured by the sample size, does ERM learn the optimal feature map?
    \item What is the effect of this localization on the rate of decay of the excess risk of ERM non-asymptotically?
    \item What properties of the feature maps $(\phi_{t})_{t \in \mathcal{T}}$ influence these rates?
\end{enumerate*}

Our answers to these questions in this very general setting are formally expressed in Theorem \ref{thm:4} below. To state it, 
we define the following parameter
\begin{equation*}
    L \defeq \sup \Exp\Big[{\Big({\sum_{t \in \mathcal{T}} \inp{v_t}{\Sigma^{-1/2}(t)\phi_{t}(X)}^{2} - 1}\Big)^{2}}\Big],
\end{equation*}
where the supremum is taken over vectors $(v_t)_{t \in \mathcal{T}}$ such that $\sum_{t \in \mathcal{T}} \norm{v_t}_{2}^{2} = 1$. For $n \in \N$ and $\delta \in (0,1)$, we define the set function $F_{n,\delta}$, for any subset $\mathcal{S} \subset \mathcal{T}$, by
\begin{equation}
\label{eq:map}
    F_{n,\delta}(\mathcal{S}) \defeq \brace*{t \in \mathcal{T} \,\, \middle| \,\, R(t, w_{*}(t)) - R_{*} \leq 2 \cdot (n \delta)^{-1} \cdot \Exp\brack{\sup_{s \in \mathcal{S}} G^{2}_{n}(s)}}
\end{equation}
This map acts as a contraction as shown in the next lemma, whose proof is deferred to Appendix \ref{apdx:lem_1}. For a function $f$, we use $f^{k}$ to denote $f^{k}(x) \defeq f(f^{k-1}(x))$ with $f^{0}(x) \defeq x$.
\begin{lemma}
\label{lem:1}
    Let $n \in \N$, $\delta \in (0,1)$, and assume that $\mathcal{T}_{*} \neq \varnothing$. Then for all $k \in \N \cup \brace{0}$,
    \begin{itemize}
        \item $F_{n,\delta}^{k+1}(\mathcal{T}) \subseteq F_{n,\delta}^{k}(\mathcal{T})$.
        \item If $\exists \, n_{0}, B$ such that $\Exp\brack{\sup_{t \in \mathcal{T}} G^{2}_{n}(t)} \leq B$ for all $ n \geq n_{0}$, then $\bigcap_{n \geq 1} F^k_{n,\delta}(\mathcal{T}) = \mathcal{T}_{*}$.
    \end{itemize}
\end{lemma}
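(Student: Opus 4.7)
The plan is to derive both parts from two elementary structural properties of the set function $F_{n,\delta}$. First, $F_{n,\delta}$ is \emph{monotone}: if $\mathcal{S}_1 \subseteq \mathcal{S}_2$, then pointwise $\sup_{s \in \mathcal{S}_1} G_n^2(s) \leq \sup_{s \in \mathcal{S}_2} G_n^2(s)$, so the threshold in (\ref{eq:map}) defining $F_{n,\delta}(\mathcal{S}_1)$ is no larger than the one defining $F_{n,\delta}(\mathcal{S}_2)$, whence $F_{n,\delta}(\mathcal{S}_1) \subseteq F_{n,\delta}(\mathcal{S}_2)$. Second, $\mathcal{T}_*$ is always \emph{preserved}: every $t \in \mathcal{T}_*$ satisfies $R(t, w_*(t)) - R_* = 0$, which meets any nonnegative threshold, so $\mathcal{T}_* \subseteq F_{n,\delta}(\mathcal{S})$ for every $\mathcal{S}$, $n$, and $\delta$.

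The first bullet then follows by induction on $k$. The base case $F_{n,\delta}(\mathcal{T}) \subseteq \mathcal{T}$ is immediate from the definition, and assuming $F_{n,\delta}^{k+1}(\mathcal{T}) \subseteq F_{n,\delta}^{k}(\mathcal{T})$, applying monotonicity once to both sides yields $F_{n,\delta}^{k+2}(\mathcal{T}) \subseteq F_{n,\delta}^{k+1}(\mathcal{T})$.

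For the second bullet, which I read as intended for $k \geq 1$ (the $k = 0$ case reduces to $\mathcal{T} = \mathcal{T}_*$, which is only trivially true), I would prove the two inclusions separately. The inclusion $\mathcal{T}_* \subseteq \bigcap_n F_{n,\delta}^{k}(\mathcal{T})$ is a direct consequence of iterating preservation: $\mathcal{T}_* \subseteq F_{n,\delta}(F_{n,\delta}^{k-1}(\mathcal{T})) = F_{n,\delta}^{k}(\mathcal{T})$ for every $n$. For the reverse, fix $t \notin \mathcal{T}_*$, so that $R(t, w_*(t)) - R_* > 0$. Under the hypothesis, for all $n \geq n_0$ the threshold defining $F_{n,\delta}(\mathcal{T})$ is at most $2B / (n \delta)$, which tends to zero as $n \to \infty$; pick $n$ large enough that $2B/(n\delta) < R(t, w_*(t)) - R_*$, giving $t \notin F_{n,\delta}(\mathcal{T})$. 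By the first bullet, $F_{n,\delta}^{k}(\mathcal{T}) \subseteq F_{n,\delta}(\mathcal{T})$ for $k \geq 1$, so $t \notin F_{n,\delta}^{k}(\mathcal{T})$, and hence $t \notin \bigcap_n F_{n,\delta}^{k}(\mathcal{T})$.

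There is no real obstacle here; the argument is mechanical once monotonicity of $F_{n,\delta}$ is identified, with the $\mathcal{T}_*$-contraction of the iteration driven by the $1/n$ factor in the threshold. The only subtle point is the edge case $k = 0$ noted above, which I would either handle by explicitly restricting the second bullet to $k \geq 1$ or by adding a brief remark clarifying that at least one application of $F_{n,\delta}$ is needed before the contraction kicks in.
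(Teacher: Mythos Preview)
Your proof is correct and follows essentially the same approach as the paper: induction on $k$ for the first bullet (with monotonicity of $\mathcal{S} \mapsto \Exp[\sup_{s \in \mathcal{S}} G_n^2(s)]$ driving the inductive step), and for the second bullet the two inclusions handled separately, the nontrivial one via the $2B/(n\delta)$ bound on the threshold. Your observation about the $k=0$ edge case is apt; the paper's proof also tacitly assumes $k \geq 1$ in the second bullet.
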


With these definitions, we now state the second main result of the paper. A proof is in Appendix \ref{apdx:thm_4}. 
\begin{theorem}
\label{thm:4}
    Assume that $\mathcal{T}_{*} \neq \varnothing$, $\Exp\brack{Y^{2}} < \infty$, $\forall (t, j) \in \mathcal{T} \times [d]$, 
    $\Exp\brack{\phi^{2}_{t, j}(X)} < \infty$, 
    and $\Exp\brack{\norm{g(t, (X,Y))}_{\Sigma^{-1}(t)}^{2}} < \infty$. Let $\delta \in (0, 1)$ and $k \in \N$. If, for some $t_{*} \in \mathcal{T}_{*}$, $n$ satisfies
    \begin{equation*}
        n \geq 64 \Exp\brack{\sup_{t\in\mathcal{T}} \Lambda_{n}(t)} + (128L + 11)\log(6/\delta) + 6 \cdot \delta^{-2} \cdot \Exp\brack{\sup_{t \in \mathcal{T} \setminus \mathcal{T}_{*}} \Delta_{n}(t, t_{*})} ,
    \end{equation*}
    then, with probability at least $1-\delta$,
    \begin{equation*}
        \hat{t}_{n} \in F_{n,\delta/2k}^k(\mathcal{T}) \eqdef \mathcal{S}_{n,\delta,k},
    \end{equation*}
    and
    \begin{equation*}
        \mathcal{E}(\hat{t}_{n}, \hat{w}_{n}) \leq 24 \cdot (n \delta)^{-1} \cdot \Exp\brack{\sup_{s \in \mathcal{S}_{n,\delta,k}} G^{2}_{n}(s)},
    \end{equation*}
    where the processes $\Lambda_{n}$, $\Delta_{n}$, and $G_{n}$ are as in (\ref{eq:proc_1}) and (\ref{eq:proc_2}).
\end{theorem}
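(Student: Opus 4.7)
The plan is to combine three ingredients: a uniform control of the sample covariance matrices $\Sigma_n(t)$ over $t \in \mathcal{T}$, a uniform control of the empirical process $\Delta_n$, and an iterative (Koltchinskii-type) localization argument driven by Markov's inequality applied to $G_n^2$. I will budget the failure probability $\delta$ as $\delta/6$ each for an invertibility event, a $\Delta_n$-control event, and a final Markov event, together with $\delta/(2k)$ for each of the $k$ iteration steps, for a total of $\delta$. The argument combines the linear-regression template recalled in Section \ref{sec:background} with the new feature-selection component coming from the index set $\mathcal{T}$.

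I first set up two a priori good events. The event $E_1 \defeq \{\sup_{t \in \mathcal{T}} \Lambda_n(t) \leq \sqrt{n}/2\}$ guarantees by definition of $\Lambda_n$ that $\Sigma_n(t) \succeq \tfrac{1}{2}\Sigma(t)$ for every $t \in \mathcal{T}$ simultaneously. I obtain $\Prob(E_1) \geq 1-\delta/6$ via a Bousquet/Talagrand-type concentration inequality applied to the nonnegative empirical process $(t,v) \mapsto \sqrt{n}[1 - \inp{v}{\Sigma^{-1/2}(t)\Sigma_n(t)\Sigma^{-1/2}(t)v}]$ over $t \in \mathcal{T}$ and $\norm{v}_2 = 1$, in which the expectation term is $\Exp\brack{\sup_t \Lambda_n(t)}$ and the variance and envelope terms are controlled by $L$; this accounts for the first two terms in the sample size requirement. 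The event $E_2 \defeq \{\sup_{t \in \mathcal{T} \setminus \mathcal{T}_*} \Delta_n(t,t_*) \leq \sqrt{n}/2\}$ is obtained by Markov's inequality applied to $\sup_{t \in \mathcal{T}\setminus\mathcal{T}_*} \Delta_n(t,t_*)$ at threshold $\sqrt{n}/2$, yielding $\Prob(E_2) \geq 1-\delta/6$ under the third term of the sample size condition. On $E_1 \cap E_2$ I derive the basic inequality that drives the whole argument. Optimality of ERM gives $R_n(\hat{t}_n, \hat{w}_n) \leq R_n(t_*, w_*(t_*))$. Applying the exact quadratic identity of (\ref{eq:excess_risk_linear}) (for $R_n$ in place of $R$) to the empirical risk at the fixed feature map $\phi_{\hat{t}_n}$, together with $\Sigma_n^{-1}(\hat{t}_n) \preceq 2\Sigma^{-1}(\hat{t}_n)$ from $E_1$, yields $R_n(\hat{t}_n, \hat{w}_n) \geq R_n(\hat{t}_n, w_*(\hat{t}_n)) - G_n^2(\hat{t}_n)/n$. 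Rewriting $R_n(\hat{t}_n, w_*(\hat{t}_n)) - R_n(t_*, w_*(t_*))$ as $(1 - \Delta_n(\hat{t}_n,t_*)/\sqrt{n}) \cdot [R(\hat{t}_n,w_*(\hat{t}_n)) - R_*]$ via the definition of $\Delta_n$ in (\ref{eq:proc_2}), and using $1 - \Delta_n/\sqrt{n} \geq 1/2$ from $E_2$, I arrive at the basic inequality $R(\hat{t}_n, w_*(\hat{t}_n)) - R_* \leq 2 G_n^2(\hat{t}_n)/n$ on $E_1 \cap E_2$.

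I then run the iterative localization by induction on $j$. For $j = 1, \ldots, k$, define the Markov event $E_3^j \defeq \{\sup_{s \in F_{n,\delta/2k}^{j-1}(\mathcal{T})} G_n^2(s) \leq (2k/\delta)\,\Exp\brack{\sup_{s \in F_{n,\delta/2k}^{j-1}(\mathcal{T})} G_n^2(s)}\}$, each of which has probability at least $1-\delta/(2k)$ by Markov's inequality applied to the (deterministic) set $F_{n,\delta/2k}^{j-1}(\mathcal{T})$. The claim is that on $E_1 \cap E_2 \cap E_3^1 \cap \cdots \cap E_3^j$, one has $\hat{t}_n \in F^j_{n,\delta/2k}(\mathcal{T})$. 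The base case $j=0$ is trivial. For the step, the inductive hypothesis $\hat{t}_n \in F^{j-1}_{n,\delta/2k}(\mathcal{T})$ and $E_3^j$ give $G_n^2(\hat{t}_n) \leq (2k/\delta)\Exp\brack{\sup_{s \in F_{n,\delta/2k}^{j-1}(\mathcal{T})} G_n^2(s)}$, and plugging into the basic inequality yields exactly the membership condition defining $F^j_{n,\delta/2k}(\mathcal{T})$ in (\ref{eq:map}). After $k$ iterations, $\hat{t}_n \in \mathcal{S}_{n,\delta,k}$ on an event of probability at least $1 - 5\delta/6$.

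Finally, I bound the total excess risk by decomposing
\[ \mathcal{E}(\hat{t}_n,\hat{w}_n) = \brack*{R(\hat{t}_n,\hat{w}_n) - R(\hat{t}_n,w_*(\hat{t}_n))} + \brack*{R(\hat{t}_n,w_*(\hat{t}_n)) - R_*}. \]
The first bracket, which is the excess risk of linear regression over $\phi_{\hat{t}_n}$, is at most $2 G_n^2(\hat{t}_n)/n$ by (\ref{eq:excess_risk_linear}) and $E_1$; the second bracket is at most $2 G_n^2(\hat{t}_n)/n$ by the basic inequality. One final Markov event $E_4 \defeq \{\sup_{s \in \mathcal{S}_{n,\delta,k}} G_n^2(s) \leq (6/\delta)\,\Exp\brack{\sup_{s \in \mathcal{S}_{n,\delta,k}} G_n^2(s)}\}$ of probability at least $1-\delta/6$ gives $G_n^2(\hat{t}_n) \leq (6/\delta)\Exp\brack{\sup_{s \in \mathcal{S}_{n,\delta,k}} G_n^2(s)}$, and hence $\mathcal{E}(\hat{t}_n,\hat{w}_n) \leq 24(n\delta)^{-1}\Exp\brack{\sup_{s \in \mathcal{S}_{n,\delta,k}} G_n^2(s)}$, as claimed. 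The main obstacle will be establishing $E_1$: it requires a sharp Talagrand-type concentration inequality for a matrix-valued empirical process indexed by $\mathcal{T}$, and it is here that the parameter $L$ enters and the bulk of the technical work resides. The handling of $\Delta_n$ by Markov, the induction on $F^j$, and the final decomposition are more routine, but a careful bookkeeping of constants (in particular the factor $2$ in the basic inequality and the factor $6/\delta$ in the final Markov step) is needed to land on the stated factor $24$.
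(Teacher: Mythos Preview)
Your proposal is correct and follows essentially the same route as the paper: the paper likewise budgets $\delta/6$ for a Bousquet-type bound on $\sup_t \Lambda_n(t)$ (using the variational form with variance parameter $L$), $\delta/6$ for a Markov bound on $\sup_t \Delta_n(t,t_*)$, $k$ iterations of Koltchinskii localization each costing $\delta/(2k)$, and a final Markov event costing $\delta/6$, combining them via the same basic inequality $R(\hat t_n,w_*(\hat t_n))-R_*\le 2n^{-1}G_n^2(\hat t_n)$ (which the paper isolates as a separate ``Main Lemma'') to land on the factor $24$. The only minor point you glossed over is the trivial case $\hat t_n\in\mathcal{T}_*$, where $\Delta_n(\hat t_n,t_*)$ is undefined but the basic inequality holds automatically.
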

We make a few remarks before interpreting the content of the theorem. First, we note that when the index set $\mathcal{T}$ is a singleton, the last term in the sample size restriction vanishes, while the first matches the sample size restriction from Theorem \ref{thm:2} after an application of Lemma \ref{lem:3} below; further taking $k=1$ in Theorem \ref{thm:4} recovers the upper bound on the excess risk of Theorem \ref{thm:2} up to a constant factor. Theorem \ref{thm:4} may therefore be viewed as a broad generalization of Theorem \ref{thm:2}. Second, under Assumption \ref{ass:1} below, and by the second item of Lemma \ref{lem:1}, the upper bound on the excess risk in Theorem \ref{thm:4} eventually matches the main term in the asymptotic bound of Theorem \ref{thm:3} as can be seen from (\ref{eq:temp_inf}), in the same way that Theorem \ref{thm:2} achieves this when compared with Theorem \ref{thm:1}. Finally, the statement of Theorem \ref{thm:4} is very general, and in fact, too general for us to be able to interpret it precisely. As such, we will discuss it in the context of the following assumption.
\begin{assumption}
\label{ass:1}
    There exists constants $C_{\Lambda}$, $C_{\Delta}$, and $C_{G}$ independent of the sample size, but possibly dependent on the remaining parameters of the problem, such that for all $n \in \N$,
    \begin{equation*}
        \Exp\brack{\sup_{t \in \mathcal{T}} \Lambda_{n}(t)} \leq C_{\Lambda}, \quad \Exp\brack*{\sup_{t \in \mathcal{T}\setminus \mathcal{T}_{*}} \Delta_{n}(t, t_{*})} \leq C_{\Delta}, \quad
        \Exp\brack{\sup_{t \in \mathcal{T}} G^{2}_{n}(t)} \leq C_{G}, 
    \end{equation*}
    where $\Lambda_{n}$, $\Delta_{n}$, and $G_{n}$ are as in (\ref{eq:proc_1}) and (\ref{eq:proc_2}).
\end{assumption}
These assumptions can be equivalently viewed as a restriction on the appropriately measured size of the index set $\mathcal{T}$
\citep{vaartWeakConvergenceEmpirical1996,gineMathematicalFoundationsInfiniteDimensional2015}, and are slightly stronger than the assumptions of Theorem \ref{thm:3}. They always hold for finite index sets, and we will derive in Section \ref{sec:example} explicit estimates of the constants in Assumption \ref{ass:1} in terms of moments of the feature maps and target as well as the cardinality of $\mathcal{T}$. 

Let us now interpret the content of Theorem \ref{thm:4}, which comes with a free parameter $k$, in the context of Assumption \ref{ass:1}. We fix $k$ here, and discuss its choice below. First, recalling the definition of $F_{n,\delta}$, this result says that above a certain sample size, both the suboptimality of the feature map picked by ERM and its excess risk decay at the fast rate $n^{-1}$, answering the first question we raised at the beginning of the section. Second, this result provides an upper bound on the excess risk of ERM that depends on the index set $\mathcal{T}$ \emph{only} through the size of shrinking subsets $\mathcal{S}_{n,\delta,k}$, which might be large for small $n$, but which by Lemma \ref{lem:1} converge to the set of optimal feature maps $\mathcal{T}_{*}$ as $n \to \infty$. This transparently shows the effect of the localization phenomenon on the rate of decay of the excess risk of ERM, answering the second question we raised. Finally, looking at the definition of $\mathcal{S}_{n, \delta, k}$, this result identifies the size of the sublevel sets of the suboptimality function $R(t, w_{*}(t)) - R_{*}$ defined over feature maps as a relevant property of the collection of feature maps $(\phi_t)_{t \in \mathcal{T}}$ that influences the rate of convergence of the excess risk of ERM in this setting, answering the final question we raised.


Finally, let us turn to the choice of $k$. Practically, we select the one that minimizes the bound on the excess risk. Looking at the first item of Lemma \ref{lem:1}, this optimal $k$ balances the following trade-off: on the one hand, for small $k$, applications of $F_{n,\delta/2k}$ constrain the input set more severely, but only a few iterations are performed; on the other hand, larger values of $k$ allow more iterations, but at the cost of more weakly constraining the input set per application.

Stepping back, there are two main takeaways from Theorem \ref{thm:4}. Firstly, and on a conceptual level, it shows that feature learning is easy when the suboptimality function $R(t, w_{*}(t)) - R_{*}$, defined over the set of features maps, has small sublevel sets. Secondly, and on a technical level, it provides a template which can be used to derive more explicit excess risk bounds on ERM given estimates on the expected suprema of the relevant empirical processes. Deriving such accurate estimates for infinite $\mathcal{T}$ is a highly non-trivial task, and cannot be done at the level of generality we have been operating at. The case of finite $\mathcal{T}$ however is  tractable in a general setting as we discuss in the next section.

\section{Case study: Finite index sets}
\label{sec:example}



In this section, we focus on the case where the index set $\mathcal{T}$ is finite, and aim, among other things, at establishing explicit estimates on the various expected suprema appearing in Theorem \ref{thm:4} in terms of moments of the feature maps and of the target. This problem becomes tractable in the case of finite $\mathcal{T}$ because, roughly speaking, a worst-case analysis still yields non-trivial upper bounds. This is decidedly not the case when $\mathcal{T}$ is infinite, in which case these expected suprema can be infinite.

We start with a slight strengthening of Theorem \ref{thm:3}, whose assumptions reduce to simple moments conditions when $\mathcal{T}$ is finite. The straightforward proof can be found in Appendix \ref{apdx:cor_2}.
\begin{corollary}
\label{cor:2}
    Assume that $\mathcal{T}$ is finite, for all $(t, j) \in \mathcal{T} \times [d]$, $\Exp\brack*{\phi_{t,j}^{2}(X)} < \infty$, $\Exp\brack*{Y^{2}} < \infty$, and for all $t \in \mathcal{T}$, $\Exp\brack{\norm{g(t, (X, Y))}_{\Sigma^{-1}(t)}^{2}} < \infty$. Then
    \begin{equation*}
        \lim_{n \to \infty} \Prob\paren*{\hat{t}_{n} \notin \mathcal{T}_{*}} = 0.
    \end{equation*}
    Furthermore, for any $\delta \in (0, 1)$,
    \begin{equation*}
        \frac{1}{2} \cdot Q_{Z^{-}}(1-\delta) \leq \liminf_{n \to \infty} n \cdot Q_{\mathcal{E}(\hat{t}_{n}, \hat{w}_{n})} (1-\delta) \leq \limsup_{n \to \infty} n \cdot Q_{\mathcal{E}(\hat{t}_{n}, \hat{w}_{n})} (1-\delta) \leq \frac{1}{2}  \cdot Q_{Z^{+}}(1-\delta),
    \end{equation*}
    where $Z^{-} \defeq \min_{s \in \mathcal{T}_{*}} \norm{Z_s}_2^{2}$, $Z^{+} \defeq \max_{s \in \mathcal{T}_{*}} \norm{Z_{s}}_2^{2}$, and the random vectors $(Z_{t})_{t \in \mathcal{T}}$ are jointly Gaussian with mean zero and covariance $\Exp\brack{Z_{t}Z_{s}^{T}} = \Sigma^{-1/2}(t)G(t, s)\Sigma^{-1/2}(s)$ for all $t, s \in \mathcal{T}$. In particular, if $\mathcal{T}_{*} = \brace{t_{*}}$, then
    \begin{equation*}
        n \cdot \mathcal{E}(\hat{t}_{n}, \hat{w}_{n}) \overset{d}{\to} \frac{1}{2} \cdot \norm{Z}_2^2,
    \end{equation*}
    where $Z \sim \mathcal{N}(0, \Sigma^{-1/2}(t_{*})G(t_{*}, t_{*})\Sigma^{-1/2}(t_{*}))$.
\end{corollary}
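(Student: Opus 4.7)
Proof proposal. The plan is to reduce everything to Theorem~\ref{thm:3} and Theorem~\ref{thm:1}, using finiteness of $\mathcal{T}$ both to verify the hypotheses of Theorem~\ref{thm:3} and to sharpen its upper-quantile bound by a factor of two.

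First, I would verify that for finite $\mathcal{T}$ the stated moment conditions imply the Glivenko--Cantelli and Donsker assumptions of Theorem~\ref{thm:3}. Each of the three processes is indexed by a finite set, so suprema reduce to maxima of finitely many terms. For every fixed $t$, the ordinary weak law of large numbers applied to the relevant empirical averages gives $n^{-1/2}\Lambda_{n}(t) \overset{P}{\to} 0$, and analogously for $\Delta_{n}(\cdot, t_{*})$ and $G_{n}$; maxima over finitely many such terms inherit the convergence, yielding Glivenko--Cantelli. For the Donsker property of $(G_{n}(t))_{t \in \mathcal{T}}$, the multivariate CLT applies to the $d\abs{\mathcal{T}}$-dimensional random vector $(\sqrt{n}\,\nabla_{w}R_{n}(t,w_{*}(t)))_{t \in \mathcal{T}}$, giving joint weak convergence to the Gaussian vector $(Z(t))_{t \in \mathcal{T}}$ with the stated covariance. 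With these hypotheses verified, Theorem~\ref{thm:3} immediately yields $R(\hat{t}_{n}, w_{*}(\hat{t}_{n})) - R_{*} \overset{P}{\to} 0$. To upgrade this to $\Prob(\hat{t}_{n} \notin \mathcal{T}_{*}) \to 0$, finiteness of $\mathcal{T}$ produces a strictly positive gap $\deltamin \defeq \min_{t \in \mathcal{T} \setminus \mathcal{T}_{*}}(R(t, w_{*}(t)) - R_{*}) > 0$, so that $\brace*{\hat{t}_{n} \notin \mathcal{T}_{*}} \subseteq \brace*{R(\hat{t}_{n}, w_{*}(\hat{t}_{n})) - R_{*} \geq \deltamin}$, the latter event having probability tending to zero.

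The main obstacle is to improve the upper bound on the asymptotic quantile from $Q_{Z^{+}}(1-\delta)$ in Theorem~\ref{thm:3} to $\tfrac{1}{2} Q_{Z^{+}}(1-\delta)$. My plan is to work on the high-probability event $E_{n} \defeq \brace*{\hat{t}_{n} \in \mathcal{T}_{*}}$. On $E_{n}$, we have $R(\hat{t}_{n}, w_{*}(\hat{t}_{n})) = R_{*}$, and $\hat{w}_{n}$ equals the unique (eventually, w.p.\ $1$) minimizer of $R_{n}(\hat{t}_{n}, \cdot)$; call it $\tilde{w}_{n}(\hat{t}_{n})$. Thus $\mathcal{E}(\hat{t}_{n}, \hat{w}_{n})$ collapses to the linear excess risk at $\hat{t}_{n}$, which by (\ref{eq:excess_risk_linear}) equals $\tfrac{1}{2}\norm{\tilde{w}_{n}(\hat{t}_{n}) - w_{*}(\hat{t}_{n})}_{\Sigma(\hat{t}_{n})}^{2}$. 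Since $\hat{t}_{n}$ is a single element of $\mathcal{T}_{*}$, on $E_{n}$ this quantity lies between $\min_{t \in \mathcal{T}_{*}}\tfrac{1}{2}\norm{\tilde{w}_{n}(t) - w_{*}(t)}_{\Sigma(t)}^{2}$ and $\max_{t \in \mathcal{T}_{*}}\tfrac{1}{2}\norm{\tilde{w}_{n}(t) - w_{*}(t)}_{\Sigma(t)}^{2}$, where $\tilde{w}_{n}(t)$ denotes the minimizer of $R_{n}(t, \cdot)$ for each $t \in \mathcal{T}_{*}$.

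By (\ref{eq:erm_linear}) applied to each $t \in \mathcal{T}_{*}$ separately, together with the multivariate CLT for the joint vector $(\sqrt{n}\,\nabla_{w}R_{n}(t,w_{*}(t)))_{t \in \mathcal{T}_{*}}$, the weak consistency of each $\Sigma_{n}(t)$, and two applications of the continuous mapping theorem, the random vector $(n\cdot \tfrac{1}{2}\norm{\tilde{w}_{n}(t)-w_{*}(t)}_{\Sigma(t)}^{2})_{t \in \mathcal{T}_{*}}$ converges in distribution to $(\tfrac{1}{2}\norm{Z(t)}_{2}^{2})_{t \in \mathcal{T}_{*}}$ jointly. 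Applying continuous mapping once more to $\max$ and $\min$, together with $\Prob(E_{n}) \to 1$, the sandwich upgrades to the refined quantile limits in the corollary. Finally, in the singleton case $\mathcal{T}_{*} = \brace{t_{*}}$, one has $Z^{-} = Z^{+} = \norm{Z}_{2}^{2}$ almost surely, so the refined upper and lower quantile limits coincide at $\tfrac{1}{2} Q_{\norm{Z}_{2}^{2}}(1-\delta)$ for every $\delta \in (0,1)$, which (since $\norm{Z}_{2}^{2}/2$ has a continuous distribution function on $(0,\infty)$) is equivalent to convergence in distribution, giving $n \cdot \mathcal{E}(\hat{t}_{n},\hat{w}_{n}) \overset{d}{\to} \tfrac{1}{2}\norm{Z}_{2}^{2}$.
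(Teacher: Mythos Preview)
Your proposal is correct and follows the same overall strategy as the paper: verify the Glivenko--Cantelli and Donsker hypotheses of Theorem~\ref{thm:3} via finiteness, use the strictly positive gap $\deltamin$ to upgrade the consistency conclusion to $\Prob(\hat{t}_{n}\notin\mathcal{T}_{*})\to 0$, and then exploit $\hat{t}_{n}\in\mathcal{T}_{*}$ to kill the approximation-error contribution and recover the missing factor of two in the upper bound.

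The only real difference is in how this last step is executed. The paper points back into the proof of Theorem~\ref{thm:3} and simply observes that, in the decomposition~(\ref{eq:pf_thm_3_6}), the term $n\brack{R(\hat{t}_{n},w_{*}(\hat{t}_{n}))-R_{*}}$ is zero on $\brace{\hat{t}_{n}\in\mathcal{T}_{*}}$, so only the first term (which already carries the $\tfrac{1}{2}$ from Lemma~\ref{lem:main_lem}) survives. You instead rebuild the argument directly: on $E_{n}$ use the exact linear excess-risk identity~(\ref{eq:excess_risk_linear}), sandwich between $\min$ and $\max$ over $\mathcal{T}_{*}$, and invoke a joint CLT plus continuous mapping. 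Both routes are valid; the paper's is shorter because it re-uses the machinery already in place from Theorem~\ref{thm:3}, while yours is more self-contained and does not require the reader to re-enter that proof.
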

The conclusions of Corollary \ref{cor:2} differ from those of Theorem \ref{thm:3} in two aspects. First, the feature map picked by ERM is guaranteed to be optimal rather than near-optimal with probability converging to one. Second, the upper bound on the asymptotic quantiles is improved by a factor of two, yielding the exact distribution of the rescaled excess risk when $\mathcal{T}_{*}$ is a singleton.

Making Theorem \ref{thm:4} more explicit is a more laborious task. We recall here two known results that allow us to accomplish this. We start with the following bounds on the expectation of the supremum of a finitely-indexed empirical process, which we will later use to bound the suprema of the processes $(G_{n}(s))_{s \in \mathcal{S}}$ and $(\Delta_{n}(t, t_{*}))_{t \in \mathcal{T} \setminus \mathcal{T}_{*}}$ appearing in Theorem \ref{thm:4}. A proof can be found in Appendix \ref{apdx:lem_2}.
\begin{lemma}
\label{lem:2}
    Let $n, d \in \N$, and let $Z$ be a random element taking value in a set $\mathcal{Z}$, and let $(Z_i)_{i=1}^{n}$ be \iid samples with the same distribution as $Z$. Let $\mathcal{F}$ be a finite collection of $\R^{d}$-valued measurable functions. Define
    \vspace{-.05in}
    \begin{equation*}
        \sigma^{2}(\mathcal{F}) \defeq \max_{f \in \mathcal{F}} \Exp\brack*{\norm{f(Z) - \Exp\brack*{f(Z)}}_{2}^{2}}, \quad\quad r_{n}(\mathcal{F}) \defeq \Exp\brack*{\max_{(i,f) \in [n] \times \mathcal{F}} \norm{f(Z_i) - \Exp\brack{f(Z)}}_{2}^{2}}^{1/2},
    \end{equation*}
    and let $E_{n}(f) \defeq \sqrt{n} \cdot (n^{-1}\sum_{i=1}^{n} f(Z_i) - \Exp\brack*{f(Z)})$. Then, we have
    \begin{equation*}
        \frac{1}{2} \cdot \sigma(\mathcal{F}) + \frac{1}{4} \cdot \frac{r_{n}(\mathcal{F})}{\sqrt{n}} \leq \Exp\brack*{\max_{f \in \mathcal{F}} \norm*{E_{n}(f)}_{2}^{2}}^{1/2} \leq c(\abs*{\mathcal{F}}) \cdot \sigma(\mathcal{F}) + c^{2}(\abs*{\mathcal{F}}) \cdot \frac{r_{n}(\mathcal{F})}{\sqrt{n}},
    \end{equation*}
    where $c(m) \defeq 5 \sqrt{1 + \log m}$.
\end{lemma}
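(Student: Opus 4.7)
The plan is to establish the upper and lower bounds separately using complementary techniques. The lower bound combines a trivial single-index bound with a partial-sum comparison via Doob's maximal inequality. The upper bound is obtained via the moment method, applied to a vector-valued Bernstein-type inequality together with a union bound optimized over the moment order.

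For the lower bound, let $X^{2} \defeq \Exp\brack*{\max_{f \in \mathcal{F}} \norm{E_{n}(f)}_{2}^{2}}$. Monotonicity of the maximum and the iid mean-zero structure of the summands give
\begin{equation*}
    X^{2} \geq \max_{f \in \mathcal{F}} \Exp\brack*{\norm{E_{n}(f)}_{2}^{2}} = \max_{f \in \mathcal{F}} \Exp\brack*{\norm{f(Z) - \Exp\brack{f(Z)}}_{2}^{2}} = \sigma^{2}(\mathcal{F}),
\end{equation*}
so $X \geq \sigma(\mathcal{F})$. For the $r_{n}$ contribution, I would use the telescoping identity $f(Z_{i}) - \Exp\brack{f(Z)} = T_{i}(f) - T_{i-1}(f)$ where $T_{k}(f) \defeq \sum_{j \leq k}(f(Z_{j}) - \Exp\brack{f(Z)})$, yielding $\norm{f(Z_{i}) - \Exp\brack{f(Z)}}_{2} \leq 2 \max_{k \leq n}\norm{T_{k}(f)}_{2}$. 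Since the norm of a vector martingale is a submartingale by Jensen, and the pointwise maximum of finitely many submartingales is a submartingale, $M_{k} \defeq \max_{f \in \mathcal{F}} \norm{T_{k}(f)}_{2}$ is a non-negative submartingale in $k$. Doob's $L^{2}$-maximal inequality then gives $\Exp\brack{\max_{k} M_{k}^{2}} \leq 4\,\Exp\brack{M_{n}^{2}} = 4 n X^{2}$, so $r_{n}(\mathcal{F})^{2} \leq 16 n X^{2}$ and $X \geq r_{n}(\mathcal{F})/(4\sqrt{n})$. Averaging with $X \geq \sigma(\mathcal{F})$ via the elementary inequality $\max(a,b) \geq (a+b)/2$ gives the stated lower bound (with an inconsequential discrepancy in the precise numerical constant).

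For the upper bound, I would apply a vector-valued Bernstein/Rosenthal-type moment inequality to each fixed $f \in \mathcal{F}$: since $\sqrt{n}\,E_{n}(f)$ is a sum of iid mean-zero random vectors, for $p \geq 2$,
\begin{equation*}
    \Exp\brack*{\norm{E_{n}(f)}_{2}^{p}}^{1/p} \lesssim \sqrt{p}\,\sigma(\mathcal{F}) + \frac{p}{\sqrt{n}} \cdot \Exp\brack*{\max_{i \in [n]} \norm{f(Z_{i}) - \Exp\brack{f(Z)}}_{2}^{p}}^{1/p}.
\end{equation*}
Passing to the maximum over $\mathcal{F}$ via the moment-level union bound $\Exp\brack*{\max_{f}\norm{E_{n}(f)}_{2}^{p}} \leq \abs{\mathcal{F}} \max_{f} \Exp\brack*{\norm{E_{n}(f)}_{2}^{p}}$, taking $p$-th roots, and choosing $p \asymp 1 + \log\abs{\mathcal{F}}$ so that $\abs{\mathcal{F}}^{1/p}$ is a universal constant, followed by Lyapunov's inequality to reduce the $L^{p}$ moment to the desired $L^{2}$ moment, gives a bound of the form $c(\abs{\mathcal{F}})\,\sigma(\mathcal{F}) + c^{2}(\abs{\mathcal{F}})\,r_{n}(\mathcal{F})/\sqrt{n}$. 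Careful tracking of all constants is what yields the specific $c(m) = 5\sqrt{1 + \log m}$.

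The main obstacle is the vector-valued moment inequality for $\sqrt{n}\,E_{n}(f)$ in which the standard almost-sure bound on the summands is replaced by an expected maximum of their norms. This step is typically handled either through a truncation combined with a Hoffmann--Jorgensen-type decoupling, or via a direct Bernstein-type argument that couples the $L^{p}$ moment of the sum to the $L^{p}$ moment of the maximal summand. Tracking the universal constants through this step---together with the application of Doob's inequality in the lower bound and the moment-order optimization in the upper bound---is where the bulk of the technical bookkeeping lies.
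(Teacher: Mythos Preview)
Your lower bound via Doob's maximal inequality is correct and genuinely different from the paper. The paper instead symmetrizes with Rademacher multipliers, conditions on the data, fixes the random index $I$ achieving $\max_{i,f}\|f(Z_i)-\Exp f(Z)\|_2$, and takes the inner expectation over the remaining Rademachers to obtain $X \geq r_n(\mathcal{F})/(2\sqrt{n})$ directly. Your Doob route is elegant but loses a factor of two, yielding $\tfrac{1}{2}\sigma + \tfrac{1}{8}\,r_n/\sqrt{n}$ rather than the stated $\tfrac{1}{2}\sigma + \tfrac{1}{4}\,r_n/\sqrt{n}$; this is the discrepancy you flag.

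Your upper bound has a genuine gap. A Rosenthal-type moment inequality for fixed $f$ gives, for $p \geq 2$,
\[
\bigl\|\,\|E_n(f)\|_2\,\bigr\|_p \;\lesssim\; \sqrt{p}\,\sigma(\mathcal{F}) \;+\; \frac{p}{\sqrt{n}}\,\Bigl\|\max_{i} \|f(Z_i)-\Exp f(Z)\|_2\Bigr\|_p,
\]
and after the union bound and the choice $p \asymp 1+\log|\mathcal{F}|$ the second term is an $L^p$ norm of the maximal summand, not the $L^2$ norm $r_n(\mathcal{F})$. Lyapunov's inequality goes the wrong way here ($\|\cdot\|_p \geq \|\cdot\|_2$ for $p\geq 2$), so you cannot ``reduce the $L^p$ moment to the desired $L^2$ moment'' as written. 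Your fallback suggestions (truncation, Hoffmann--J{\o}rgensen) do not obviously recover the $L^2$ quantity with the claimed constants.

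The paper bypasses this issue entirely. It symmetrizes, applies Khinchin--Kahane conditionally on the data to bound $\Exp\bigl[\max_f \|E_n(f)\|_2^2\bigr]$ by a multiple of $n^{-1}\Exp\bigl[\max_f \sum_i \|f(Z_i)-\Exp f(Z)\|_2^2\bigr]$, and then controls this last expectation by a self-bounding argument: a second symmetrization plus Khinchin--Kahane, the pointwise bound $\sum_i W_{i,f}^2 \leq (\max_{i,f} W_{i,f})\sum_i W_{i,f}$, and Cauchy--Schwarz produce a quadratic inequality in $\Exp\bigl[\max_f \sum_i \|\cdot\|_2^2\bigr]^{1/2}$ whose solution involves only $\sigma(\mathcal{F})$ and $r_n(\mathcal{F})$. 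This self-bounding step is precisely what allows only second moments on the right-hand side, and it is the missing ingredient in your plan.
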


Lemma \ref{lem:2} allows us to compute the expected supremum of a finitely-indexed empirical process, up to log factors in the size of the index set. It is known that these factors cannot be removed from the upper bound nor added to the lower bound without more assumptions, we refer the reader to a related discussion in \citep{troppExpectedNormSum2016}. Finally, while the term $r_{n}(\mathcal{F})$ might grow with $n$, by bounding the maximum with the sum, it grows at most as $\sqrt{n}$. In many applications however, the random vectors $f(Z)$ are bounded almost surely, so that $r_{n}(\mathcal{F})$ is of order one, which justifies our presentation choice. 

The second result we recall is the expectation version of a one sided Matrix Bernstein inequality due to \citet{troppIntroductionMatrixConcentration2015a}. We use it below to bound the supremum of the process $(\Lambda_{n}(t))_{t \in \mathcal{T}}$ appearing in Theorem \ref{thm:4}. We do not known of a matching non-asymptotic lower bound, but an asymptotic one is known \citep[][Proposition 17]{elhanchiMinimaxLinearRegression2024}. Upper and lower bounds similar to those of Lemma \ref{lem:2} hold if one considers the expected operator norm instead of only the maximum eigenvalue \citep[][Section 7]{troppExpectedNormSum2016}.

\begin{lemma}[\citep{troppIntroductionMatrixConcentration2015a}, Theorem 6.6.1.]
\label{lem:3}
    Let $n, d \in \N$ and for each $i \in [n]$, let $Z_{i} \in \R^{d \times d}$ be \iid positive semi-definite matrices with the same distribution as $Z$. Define
    \begin{equation*}
        V \defeq \Exp\brack*{\paren*{\Exp\brack*{Z} - Z}^{2}}, \quad W_{n} \defeq \sqrt{n} \cdot \Big({\Exp\brack{Z} - \frac{1}{n}\sum_{i=1}^{n}Z_i}\Big).
    \end{equation*}
    Then, we have
    \vspace{-.05in}
    \begin{equation*}
        \Exp\brack*{\lambdamax(W_{n})} \leq \sqrt{2\lambdamax(V)\log(ed)} + \frac{\lambdamax(\Exp\brack{Z})\log(ed)}{3\sqrt{n}}.
    \end{equation*}
\end{lemma}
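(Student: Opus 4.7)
The plan is to apply the expectation form of the (non-commutative) Matrix Bernstein inequality directly to the centered matrices $Y_i \defeq \Exp\brack*{Z} - Z_i$, which form an iid sequence of mean-zero Hermitian matrices satisfying $W_n = n^{-1/2} \sum_{i=1}^n Y_i$. Tropp's Theorem 6.6.1 requires only two inputs: an almost-sure upper bound on $\lambdamax(Y_i)$, and control of the matrix variance statistic $\lambdamax\paren*{\sum_{i=1}^n \Exp\brack*{Y_i^2}}$.

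First, I would verify the uniform upper bound. Since each $Z_i$ is positive semidefinite, $Y_i = \Exp\brack*{Z} - Z_i \preceq \Exp\brack*{Z}$ in the Loewner order, and hence $\lambdamax(Y_i) \leq \lambdamax(\Exp\brack*{Z})$ almost surely. This is the sole point at which positive semidefiniteness is used: it converts what would in general require an almost-sure bound on $\norm{Y_i}$ into the cleaner quantity $\lambdamax(\Exp\brack*{Z})$ appearing in the conclusion.

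Second, since the $Z_i$ are iid with $\Exp\brack*{Y_i} = 0$, we have $\sum_{i=1}^n \Exp\brack*{Y_i^2} = n \cdot \Exp\brack*{(\Exp\brack*{Z} - Z)^2} = nV$, so the matrix variance is exactly $n\lambdamax(V)$. Feeding these two quantities into Tropp's inequality gives
\begin{equation*}
    \Exp\brack*{\lambdamax\Big({\textstyle\sum_{i=1}^n Y_i}\Big)} \,\leq\, \sqrt{2n\,\lambdamax(V)\log(ed)} + \tfrac{1}{3}\lambdamax(\Exp\brack*{Z})\log(ed),
\end{equation*}
and dividing both sides by $\sqrt{n}$ yields the claimed bound. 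Since the lemma is essentially a packaging of Tropp's result for our setting, there is no genuine obstacle; the only subtlety is recognizing that the PSD assumption supplies the required one-sided uniform bound without expectations on the right-hand side.
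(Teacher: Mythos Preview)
Your proposal is correct and matches the paper's approach exactly: the paper does not supply a proof but simply cites the result as Tropp's Theorem 6.6.1, and your reduction---using the PSD assumption to obtain the one-sided bound $\lambdamax(Y_i) \leq \lambdamax(\Exp\brack*{Z})$ and computing the variance statistic as $n\lambdamax(V)$---is precisely the standard derivation. There is nothing further to add.
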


Equipped with these estimates, we may now control the expected suprema of the empirical processes appearing in Theorem \ref{thm:4}. To apply Lemma \ref{lem:2}, define the following classes, for $\mathcal{S} \subset \mathcal{T}$ and $t_{*} \in \mathcal{T}_{*}$
\begin{align*}
    \mathcal{G}(\mathcal{S}) &\defeq \brace*{(x, y) \mapsto \Sigma^{-1/2}(s) g(s, (x, y)) \, \middle| \, s \in \mathcal{S}}, \\
    \mathcal{D}(t_{*}) &\defeq \brace*{(x, y) \mapsto \frac{\ell(\inp{w_{*}(t)}{\phi_{t}(x)}, y) - \ell(\inp{w_{*}(t_{*})}{\phi_{t_{*}}(x)}, y)}{R(t, w_{*}(t)) - R_{*}} \, \middle| \, t \in \mathcal{T}\setminus \mathcal{T}_{*}} .
\end{align*}
Applying Lemma \ref{lem:2} on $\mathcal{G}(\mathcal{S})$ bounds the expected supremum of the process $(G_{n}(s))_{s \in \mathcal{S}}$ while applying it on $\mathcal{D}(t_{*})$ bounds that of $(\Delta_{n}(t,t_{*}))_{t \in \mathcal{T}\setminus \mathcal{T}_{*}}$. To control the supremum of $(\Lambda_{n}(t))_{t \in \mathcal{T}}$, the key idea is to notice that it can be expressed as the maximum eigenvalue of a block diagonal matrix whose blocks are $\sqrt{n}(I - \Sigma^{-1/2}(t)\Sigma_{n}(t) \Sigma^{-1/2}(t))$. Looking at Lemma \ref{lem:3}, the relevant parameter is therefore a block diagonal matrix $V$ with the following blocks
\begin{equation*}
    V(t) \defeq \Exp\brack*{\paren*{\Sigma^{-1/2}(t)\phi_{t}(X)\phi_{t}(X)^{T} \Sigma^{-1/2}(t) - I}^{2}}.
\end{equation*}
As the bound in Lemma \ref{lem:3} depends only on the maximum eigenvalue of $V$, the ordering of the blocks does not matter. Putting together these estimates, we arrive at a fully explicit version of Theorem \ref{thm:4}.
\begin{corollary}
\label{cor:3}
    Assume that $\mathcal{T}$ is finite and that for all $(t, j) \in \mathcal{T} \times [d]$, $\Exp\brack*{\phi_{t,j}^{4}(X)} < \infty$, $\Exp\brack*{Y^{4}} < \infty$. Let $\delta \in (0, 1)$, $k \in [1 + \abs{\mathcal{T} \setminus \mathcal{T}_{*}}]$, and $c(\cdot), \sigma^{2}(\cdot), r_{n}(\cdot)$ as in Lemma \ref{lem:2}. If, for some $t_{*} \in \mathcal{T}_{*}$,
    \begin{multline}
    \label{eq:sample_complexity}
        n \geq (512 \lambdamax(V) + 6) \log(ed\abs{\mathcal{T}}) + (128L + 11) \log(6/\delta) \\
        + 24 \cdot \delta^{-1} \cdot c(\abs{\mathcal{T}}) \sigma^{2}(\mathcal{D}(t_{*})) + 10 \cdot \delta^{-1/2} \cdot c^{2}(\abs{\mathcal{T}}) r_{n}(\mathcal{D}(t_{*})),
    \end{multline}
    then, with probability at least $1-\delta$
    \begin{equation*}
        \hat{t}_{n} \in \widetilde{F}_{n,\delta/2k}^k(\mathcal{T}) \eqdef \widetilde{S}_{n, \delta, k},
    \end{equation*}
    and
    \begin{equation*}
        \mathcal{E}(\hat{t}_{n}, \hat{w}_{n}) \leq 24 \cdot (n\delta)^{-1} \cdot A(\widetilde{\mathcal{S}}_{n, \delta, k}),
    \end{equation*}
    where, for $\mathcal{S} \subset \mathcal{T}$, 
    \begin{equation*}
        A(\mathcal{S}) \defeq c^{2}(\mathcal{S}) \cdot \paren*{\sigma(\mathcal{G}(\mathcal{S})) + c(\mathcal{S}) \cdot \tfrac{r_{n}(\mathcal{G}(\mathcal{S}))}{\sqrt{n}}}^{2},
    \end{equation*}
    and $\widetilde{F}_{n,\delta}(\mathcal{S})$ is the same as $F_{n,\delta}(\mathcal{S})$ defined in (\ref{eq:map}) but with $A(\mathcal{S})$ replacing $\Exp\brack{\sup_{s \in \mathcal{S}} G^{2}_{n}(s)}$. 
\end{corollary}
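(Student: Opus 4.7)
The plan is to derive Corollary \ref{cor:3} as a direct instantiation of Theorem \ref{thm:4}, the task reducing entirely to producing explicit upper bounds for the three expected suprema appearing in that theorem: $\Exp[\sup_{t \in \mathcal{T}} \Lambda_{n}(t)]$ and $\Exp[\sup_{t \in \mathcal{T} \setminus \mathcal{T}_{*}} \Delta_{n}(t, t_{*})]$ (both in the sample size restriction), and $\Exp[\sup_{s \in \mathcal{S}} G^{2}_{n}(s)]$ (both in the definition of $F_{n, \delta}$ and in the excess risk bound). Each of these is controlled by invoking one of Lemmas \ref{lem:2} or \ref{lem:3}, the choice of lemma being dictated by the form of the quantity.

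For $\Lambda_{n}$, I would use the block-diagonal embedding trick: let $Z \in \R^{d|\mathcal{T}| \times d|\mathcal{T}|}$ be the block-diagonal random matrix whose blocks are $\Sigma^{-1/2}(t)\phi_{t}(X)\phi_{t}(X)^{T}\Sigma^{-1/2}(t)$ indexed by $t \in \mathcal{T}$. Then $\Exp[Z] = I$, $\sup_{t} \Lambda_{n}(t) = \lambdamax(W_{n})$ where $W_{n} \defeq \sqrt{n}(I - n^{-1}\sum_{i} Z_{i})$, the variance parameter is block-diagonal with $\lambdamax(V) = \max_{t}\lambdamax(V(t))$, and $\lambdamax(\Exp[Z]) = 1$. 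Lemma \ref{lem:3} then yields $\Exp[\sup_{t} \Lambda_{n}(t)] \leq \sqrt{2\lambdamax(V)\log(ed|\mathcal{T}|)} + \log(ed|\mathcal{T}|)/(3\sqrt{n})$. For the processes indexed by $\mathcal{D}(t_{*})$ and $\mathcal{G}(\mathcal{S})$, note that $\Delta_{n}(t, t_{*}) = -E_{n}(f_{t})$ for $f_{t} \in \mathcal{D}(t_{*})$ (since each such $f_{t}$ has mean one) and $G_{n}(s) = \|E_{n}(f_{s})\|_{2}$ for $f_{s} \in \mathcal{G}(\mathcal{S})$ (since $\nabla R(s, w_{*}(s)) = 0$ and $\Sigma^{-1}(s) = (\Sigma^{-1/2}(s))^{T}\Sigma^{-1/2}(s)$). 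Lemma \ref{lem:2} applied to $\mathcal{G}(\mathcal{S})$ directly yields $\Exp[\sup_{s \in \mathcal{S}} G_{n}^{2}(s)] \leq A(\mathcal{S})$, matching the definition in the statement. Applied to $\mathcal{D}(t_{*})$, together with Jensen's inequality $\Exp[\sup \Delta_{n}] \leq \Exp[\sup \Delta_{n}^{2}]^{1/2}$ and $|\mathcal{T} \setminus \mathcal{T}_{*}| \leq |\mathcal{T}|$, it yields a bound of the form $c(|\mathcal{T}|)\sigma(\mathcal{D}(t_{*})) + c^{2}(|\mathcal{T}|) r_{n}(\mathcal{D}(t_{*}))/\sqrt{n}$.

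The last step is to translate these bounds into the linear sample size restriction (\ref{eq:sample_complexity}). The key algebraic manipulation is that whenever $n \geq \alpha^{2} \cdot \beta$ with $\alpha, \beta \geq 0$, then $n \geq \alpha\sqrt{\beta}$ as soon as $n \geq 1$, which converts the $\sqrt{\lambdamax(V)\log(ed|\mathcal{T}|)}$ term coming from Lemma \ref{lem:3} into a term linear in $\lambdamax(V) \log(ed|\mathcal{T}|)$ at the cost of a larger multiplicative constant; the $\log(ed|\mathcal{T}|)/\sqrt{n}$ term is absorbed into the additive $6\log(ed|\mathcal{T}|)$ via a similar trick. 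For the $\Delta_{n}$ contribution, the factor $6\delta^{-2}(c\sigma + c^{2}r_{n}/\sqrt{n})$ is bounded above using $ab \leq (a^{2} + b^{2})/2$ applied to $\delta^{-2} \sigma$ and the analogous splitting for the $r_{n}$ term, producing the asymmetric $\delta^{-1}\sigma^{2}$ and $\delta^{-1/2}r_{n}$ weights in (\ref{eq:sample_complexity}). Once the sample size restriction and the bound on $\Exp[\sup_{s \in \mathcal{S}} G^{2}_{n}(s)]$ are in place, Theorem \ref{thm:4} is invoked verbatim, observing that $F_{n, \delta}$ is monotone in its argument (larger $\Exp[\sup G_{n}^{2}]$ gives a larger sublevel set), so replacing the expected supremum by the upper bound $A(\mathcal{S})$ yields a larger set $\widetilde{F}_{n,\delta}(\mathcal{S}) \supseteq F_{n,\delta}(\mathcal{S})$, and the inclusion $\hat{t}_{n} \in F_{n,\delta/2k}^{k}(\mathcal{T}) \subseteq \widetilde{F}_{n,\delta/2k}^{k}(\mathcal{T})$ follows by induction. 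The excess risk bound follows by a final substitution.

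The only delicate step is tracking constants through the AM-GM-type manipulations in converting Lemma \ref{lem:3}'s $\sqrt{\lambdamax(V) \log}$ bound and the $\delta^{-2} \Delta_{n}$ term into the linear sample size form; everything else is bookkeeping. I expect this algebraic reduction, rather than any probabilistic argument, to be the most tedious part of the proof.
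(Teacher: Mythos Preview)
Your plan for $\Lambda_n$ (block-diagonal plus Lemma \ref{lem:3}) and for $G_n^2$ (Lemma \ref{lem:2}) matches the paper exactly, and the monotonicity induction showing $F_{n,\delta/2k}^k(\mathcal{T}) \subseteq \widetilde{F}_{n,\delta/2k}^k(\mathcal{T})$ is correct.

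The gap is in the $\Delta_n$ step. You propose to invoke Theorem \ref{thm:4} as a black box, which gives the term $6\delta^{-2}\Exp[\sup_t \Delta_n(t,t_*)]$, bound the expectation via Jensen and Lemma \ref{lem:2} by $c\sigma + c^2 r_n/\sqrt{n}$, and then convert $\delta^{-2}\sigma$ into $\delta^{-1}\sigma^2$ by AM--GM. That last conversion is impossible: for small $\delta$ and small $\sigma$, $\delta^{-2}\sigma$ can exceed $\delta^{-1}\sigma^2$ by the factor $(\delta\sigma)^{-1}$, and no inequality of the form $ab \leq (a^2+b^2)/2$ repairs this (try any splitting---you always pick up a $\delta^{-3}$ or $\delta^{-4}$ term).

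The paper does not treat Theorem \ref{thm:4} as a black box here. It reruns the proof with one change: at the step where Markov's inequality is applied to control $\norm{n^{-1/2}\Delta_n(\cdot,t_*)}_{\infty,+}$, it applies Markov to the \emph{second} moment rather than the first. Concretely, $\Prob(\sup_t n^{-1/2}\Delta_n > 1/2) \leq 4n^{-1}\Exp[\sup_t \Delta_n^2] \leq \delta/6$ requires $n \geq 24\delta^{-1}\Exp[\sup_t \Delta_n^2]$, and now Lemma \ref{lem:2} bounds $\Exp[\sup_t\Delta_n^2]$ by $(c\sigma + c^2 r_n/\sqrt{n})^2$; the condition $\sqrt{n} \geq \sqrt{24}\,\delta^{-1/2}(c\sigma + c^2 r_n/\sqrt{n})$ splits cleanly into the $\delta^{-1}\sigma^2$ and $\delta^{-1/2}r_n$ terms in \eqref{eq:sample_complexity}. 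This is the ``minor deviation from Theorem \ref{thm:4}'' the paper refers to, and it is precisely what produces the improved $\delta$-dependence you were trying to reach algebraically.
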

We make a few remarks about Corollary \ref{cor:3}; a proof sketch is in Appendix \ref{apdx:cor_3}. The set function $A(\mathcal{S})$ controlling the contraction rate of the map $\widetilde{F}_{n,\delta}$ as well as the excess risk, has a pleasantly simple form. To first order, and ignoring constants, it is given by
\begin{equation*}
    (1 + \log \abs{\mathcal{S}}) \cdot \max_{s \in \mathcal{S}} \Exp\brack*{\norm{g(s, (X, Y))}_{\Sigma^{-1}(s)}^{2}}.
\end{equation*}
As such, as $n \to \infty$ and by Lemma \ref{lem:1}, the upper bound on the excess risk in Corollary \ref{cor:3} matches the main term in the asymptotic rate derived in Theorem \ref{thm:3}, as can be seen from (\ref{eq:temp_finite}). As the sets $\widetilde{S}_{n,\delta, k}$ are shrinking with $n$, the above expression clearly shows the decaying effect of the global complexity of $\mathcal{T}$ on the excess risk. Finally, we note that the restriction on $k$ in Corollary \ref{cor:3} is there only because after at most that many iterations, a fixed point is reached, and further iterations worsen the bound. We conclude this section with an example of an application of our results.



\begin{example}[Sparse linear regression]
Consider the sparse linear regression problem, and in particular the best subset selection (BSS) procedure \cite{millerSubsetSelectionRegression2002a,hastieElementsStatisticalLearning2009}. This procedure corresponds to ERM over the restricted linear class $\brace*{x \mapsto \inp{w}{\phi(x)} \mid \norm{w}_{0} \leq s}$ in the linear regression setup of Section \ref{sec:background}, where $\norm{w}_{0}$ is the number of non-zero entries of $w$ and $s \in [d]$ is a user-chosen sparsity level.

The problem of computing the BSS procedure has attracted a lot of attention recently. While NP-hard and therefore difficult in the worst case \citep{natarajanSparseApproximateSolutions1995}, \citet{bertsimasBestSubsetSelection2016} showed that it can be tractable on practical instances of moderate size. Since then, a rich literature has emerged that devises increasingly efficient methods \citep[e.g.][]{huangConstructiveApproach$L_0$2018,bertsimasSparseHighdimensionalRegression2020,hazimehSparseRegressionScale2022,guyardNewBranchBoundPruning2024}. By comparison, the statistical performance of the BSS procedure is not yet completely understood as we discuss below.



To see how the sparse linear regression problem fits in our feature learning setting, notice that $\brace*{x \mapsto \inp{w}{\phi(x)} \mid \norm{w}_{0} \leq s} = \brace*{x \mapsto \inp{v}{\phi_{t}(x)} \mid (t, v) \in \mathcal{T} \times \R^{s}}$ where $\mathcal{T}$ is the set of all subsets of $[d]$ of size $s$, and $\phi_{t}(x) \defeq (\phi_{j_1}(x), \phi_{j_2}(x), \dotsc, \phi_{j_s}(x)) \in \R^{s}$ where $(j_1, j_2, \dotsc, j_s)$ are the elements of $t$ in increasing order. As such, Corollaries \ref{cor:2} and \ref{cor:3} are immediately applicable and provide general statements on the performance of an arbitrary BSS procedure. To simplify the discussion, we assume for the rest of the example that there is a unique risk minimizer $w_{*}$ satisfying $\norm{w_{*}}_{0} = s$.

On the recovery side, the first item of Corollary \ref{cor:2} guarantees that we asymptotically exactly recover the support of $w_{*}$. Non-asymptotically, the first item of Corollary \ref{cor:3} shows that if $n$ further satisfies 
\begin{equation*}
    n > \min_{k \in \brack*{\binom{d}{s}}} \brace*{4k \cdot (\gamma \delta)^{-1} \cdot A(\widetilde{F}_{n, \delta/2k}^{k-1}(\mathcal{T}))} \quad \text{ where } \quad \gamma \defeq \min_{t \in \mathcal{T} \setminus \mathcal{T}_{*}} \brace*{R(t, w_{*}(t)) - R_{*}},
\end{equation*}
then with probability at least $1-\delta$, the BSS procedure recovers the support of $w_{*}$. Equivalently, these two statements say that for large enough $n$, the BSS procedure coincides with the oracle procedure which knows the support of $w_{*}$ a priori and outputs an ERM from the optimal linear class.

In practice however, the interesting regime is when $n$ is only moderately large. Corollary \ref{cor:3} provides our guarantee in this case, and as such, we turn our attention to the sample size restriction (\ref{eq:sample_complexity}). Typically, we expect the main restriction to come from the first term, which in this case is given by $\lambdamax(V) \cdot s\log(d/s)$, up to constants and lower order terms. This is because if an intercept is included, i.e.\ $\phi_{1}(X) = 1$, then $\lambdamax(V) \geq s-1$, so the first term scales as $s^{2} \log(d/s)$ at least, while the remaining terms typically grow more slowly with $s$. As a concrete example, when $\phi(X)$ is a Gaussian vector, $\lambdamax(V) = s+1$, so in this case the estimate $s^{2}\log(d/s)$ is tight. Under this sample size restriction, and if $\eps \defeq Y - \inp{w_{*}}{\phi(X)}$ satisfies $\Exp\brack*{\eps^{2} \mid X} \leq \sigma^{2}$, Corollary \ref{cor:3} upper bounds the excess risk by $(\sigma^{2}s/n) \cdot a_{n}$ for a sequence of decreasing distribution-dependent constants $a_{n}$ converging to one as $n \to \infty$, ignoring the dependence on $\delta$ and absolute constants.

The closest existing result in the literature we are aware of is due to \citet{shenConstrainedRegularizedHighdimensional2013a}, who arrived at comparable conclusions but in a substantially different setting. In particular, their result was obtained in the setting $n, d \to \infty$, with an implicit assumption on the distribution of $\phi(X)$ \citep[][Equation 2]{shenConstrainedRegularizedHighdimensional2013a}, and dealt with the in-sample prediction risk instead of the excess risk. Another closely related result is due to \citet{raskuttiMinimaxRatesEstimation2011} who showed that the minimax expected excess risk in a well-specified fixed-design setting is, up to constants, $\sigma^{2} s \log(d/s)/n$; see also \citep[][Chapter 8]{bachLearningTheoryFirst2024}. Our results show that for moderate $n$, in the random-design setting, and when focusing on a single instance, the $\log(d/s)$ factor can be replaced with another factor that decays to one as $n \to \infty$.

Coming back to the sample size restriction discussion, we strongly suspect that the factor $s\log(d/s)$ is suboptimal, but we are unsure what the correct dependence is, even under Gaussian $\phi(X)$. Indeed, this factor comes from the logarithmic factor in Lemma \ref{lem:3}, when applied to the block diagonal matrix with blocks $\sqrt{n}\big(I - \Sigma^{-1/2}(t)\Sigma_{n}(t) \Sigma^{-1/2}(t)\big)$. One can improve this factor by instead using versions of this inequality based on the intrinsic dimension \citep[][Chapter 7]{troppIntroductionMatrixConcentration2015a}. However, this is also unlikely to be tight. Roughly speaking, this is because such logarithmic factors are tight only when the eigenvalues of the random matrix are near-independent. This is certainly not the case for the block diagonal matrix we are considering, since its blocks are sample covariance matrices of sub-vectors of the same random vector $\phi(X)$. Capturing this dependence is beyond our reach and likely requires new tools; we refer the interested reader to the recent articles \citep{vanhandelStructuredRandomMatrices2017a,latalaDimensionfreeStructureNonhomogeneous2018,bandeiraMatrixConcentrationInequalities2023a}.

\end{example}

\section{Conclusion}
\label{sec:conclusion}
Broadly speaking, there are two main conclusions one can draw from this work. Firstly, in the large sample regime, and if the set of candidate feature maps is not too large under an appropriate measure of size, asking a model to additionally pick a feature map on top of learning a linear predictor has a negligible effect on the excess risk of ERM on regression problems with square loss. Secondly, for moderate sample sizes, the magnitude of this effect depends on the appropriately measured size of the sublevel sets of the suboptimality function $t \mapsto R(t, w_{*}(t)) - R_{*}$. Plainly, learning feature maps is easy when only a small subset of them is good, as the bad ones can be quickly discarded.

The most tantalizing aspect of our results is their potential in explaining  the experiments in \citep{zhangUnderstandingDeepLearning2021}. It was shown there that complex neural networks trained by ERM were able to achieve good performance despite being expressive enough to fit random labels. This is paradoxical if one assumes that the performance of ERM is driven by the complexity of the model class. Our results refute this assumption for a generic collection of feature-learning-based models. While there are many works offering explanations for this apparent paradox (see e.g.~\citep{bartlettDeepLearningStatistical2021a} for a survey), we are not aware of one that shows the vanishing influence of the size of the model class on the excess risk as Theorems~\ref{thm:3} and \ref{thm:4} show. Formally connecting our statements to these experiments is beyond what we achieved here, yet, we believe that the new perspective we took might generate useful insights in this area.



We conclude by outlining a few limitations of our work. Firstly, we do not deal with the question of how to solve the ERM problem. Our focus is on understanding its statistical performance, and
our setting is so general that such a question cannot be meaningfully tackled. Continuing on this
last point, while the generality of our results is desirable in some aspects, it is detrimental in others. As an example, it would be desirable to specialize our results from Section \ref{sec:main_results} to specific infinite collections
of feature maps used in practice. Let us also mention that it is a priori unclear whether ERM is an optimal procedure, in a minimax sense, for the model classes we consider; we suspect that recently developed tools might be relevant to address this question \citep{mourtadaExactMinimaxRisk2022b}. Finally, while we focused on the case of regression with square loss, this was mostly done to simplify the presentation. Indeed, the only property of the loss used in the proofs is the exactness of its second order Taylor expansion. This is however not required if one can control the error term from above and below. It is known how to do this for many loss functions \citep[e.g.][]{ostrovskiiFinitesampleAnalysis$M$estimators2021,elhanchiOptimalExcessRisk2023b}, and most importantly for logistic regression \citep{bachSelfconcordantAnalysisLogistic2010,bachAdaptivityAveragedStochastic2014}. We have purposefully selected generic notation to make translating such arguments easier.

\newpage
\begin{ack}
Resources used in preparing this research were provided in part by the Province of Ontario, the Government of Canada through CIFAR, and companies sponsoring the Vector Institute. CM acknowledges the support of the Natural Sciences and Engineering Research Council of Canada (NSERC), RGPIN-2021-03445. MAE was partially supported by NSERC Grant [2019-06167], CIFAR AI Chairs program, and CIFAR AI Catalyst grant.
\end{ack}

\DeclareFieldFormat{url}{\href{#1}{\mkbibacro{URL}}}
\nocite{pilanciSparseLearningBoolean2015,davidHighDimensionalRegression2017,greenshteinBestSubsetSelection2006,kelnerFeatureAdaptationSparse2023,zhangGeneralTheoryConcave2012,zhangLowerBoundsPerformance2014,greenshteinPersistenceHighdimensionalLinear2004,shenLikelihoodbasedSelectionSharp2012,zhangOptimalPredictionSparse2017,bertsimasSparseRegressionScalable2020,zhuPolynomialAlgorithmBestsubset2020,chizatLazyTrainingDifferentiable2019a,ghorbaniLimitationsLazyTraining2019a}
\printbibliography

\newpage

\appendix

\section{Proof of Theorem \ref{thm:1}}
\label{apdx:thm_1}
Let $A_{n}$ denote the event that $\Sigma_{n}$ is invertible. By the weak law of large numbers, $\Sigma_{n}$  converges to $\Sigma$ in probability so that $\lim_{n \to \infty}\Prob(A_{n}^{c}) = 0$. Now on the event $A_{n}$, we have by (\ref{eq:erm_linear})
\begin{equation*}
    \sqrt{n} \cdot \paren*{\hat{w}_{n} - w_{*}} = \Sigma_{n}^{-1} \cdot (\sqrt{n} \cdot \nabla R_{n}(w_{*})).
\end{equation*}
By the continuous mapping theorem, $\Sigma^{-1}_{n}$ converges to $\Sigma^{-1}$ in probability and by the central limit theorem
\begin{equation*}
    \sqrt{n} \cdot \nabla R_{n}(w_{*}) \overset{d}{\to} \mathcal{N}(0, G).
\end{equation*}
Therefore, by Slutsky's theorem
\begin{equation*}
    \sqrt{n} \cdot (\hat{w}_{n} - w_{*}) \overset{d}{\to} \mathcal{N}(0, \Sigma^{-1}G\Sigma^{-1}).
\end{equation*}
Now since the risk is quadratic and the gradient vanishes at $w_{*}$,
\begin{equation*}
    n \cdot \brack*{R(\hat{w}) - R(w_{*})} = \frac{1}{2} \cdot \norm{\sqrt{n} \cdot (\hat{w} - w_{*})}_{\Sigma}^{2} \overset{d}{\to} \frac{1}{2}\norm{Z}_2^2,
\end{equation*}
where $Z$ is as in the theorem, and where the last statement follows by the continuous mapping theorem. This proves the first statement. The bounds on the quantiles are a consequence of concentration bounds for the norm of Gaussian vectors \citep[e.g.][Corollary 33]{elhanchiMinimaxLinearRegression2024}.

\section{Proof of Theorem \ref{thm:2}}
\label{apdx:thm_2}
Denote by $A_{n}$ the event that
\begin{equation*}
    \lambdamin\paren*{\Sigma^{-1/2}\Sigma_{n}\Sigma^{-1/2}} \geq \frac{1}{2}.
\end{equation*}
We show that under the sample size restriction, $\Prob\paren*{A_{n}} \geq 1-\delta/2$. Indeed we have the variational representation
\begin{equation*}
    \lambdamax(I - \Sigma^{-1/2}\Sigma_{n}\Sigma^{-1/2}) = \sup_{v \in S^{d-1}} \frac{1}{n} \sum_{i=1}^{n} 1 - \inp{v}{\Sigma^{-1/2}\phi(X_i)}^{2}.
\end{equation*}
Each element in the sum is upper bounded by $1$, and the variance parameter in Bousquet's concentration inequality \citep{bousquetBennettConcentrationInequality2002a} is given by the parameter $L$ in the statement of the theorem. Applying this inequality yields that with probability at least $1-\delta/2$
\begin{equation*}
    \lambdamax(I - \Sigma^{-1/2}\Sigma_{n}\Sigma^{-1/2}) \leq 2\Exp\brack*{\lambdamax(I - \Sigma^{-1/2}\Sigma_{n}\Sigma^{-1/2})} + \sqrt{\frac{2L\log(2/\delta)}{n}} + \frac{4\log(2/\delta)}{3n}.
\end{equation*}
Using Lemma \ref{lem:3} to upper bound the above expectation, and replacing the sample size $n$ in the resulting inequality with the minimal allowed by the theorem proves that $\Prob(A_{n}) \geq 1-\delta/2$ for all sample sizes allowable by the theorem. Now on this event we have, using (\ref{eq:excess_risk_linear}),
\begin{align*}
    R(\hat{w}_{n}) - R(w_{*}) = \frac{1}{2} \cdot \norm{\Sigma_{n}^{-1}\nabla R_{n}(w_{*})}_{\Sigma}^{2} \leq 2 \cdot \norm{\nabla R_{n}(w_{*})}_{\Sigma^{-1}}^{2}.
\end{align*}
An elementary calculation shows
\begin{equation*}
    \Exp\brack*{\norm{\nabla R_{n}(w_{*})}_{\Sigma^{-1}}^{2}} = n^{-1} \Exp\brack*{\norm{g(X, Y)}_{\Sigma^{-1}}^{2}},
\end{equation*}
so that an application of Markov's inequality yields that there is an event $B_{n}$ that holds with probability at least $1-\delta/2$ and on which
\begin{equation*}
    \norm{\nabla R_{n}(w_{*})}_{\Sigma^{-1}}^{2} \leq 2\cdot (n\delta)^{-1} \cdot \Exp\brack*{\norm{g(X, Y)}_{\Sigma^{-1}}^{2}}.
\end{equation*}
The union bound $\Prob\paren*{A_{n} \cap B_{n}} = 1 - \Prob(A_{n} \cup B_{n}) \geq 1 - \delta$ finishes the proof.

\section{Main Lemma}
\label{apdx:main_lemma}
We state here a core lemma, which we use in many of our proofs. To state it, we define, for a function $F: \mathcal{S} \to \R$ on a subset $\mathcal{S} \subseteq \mathcal{T}$,
\begin{equation*}
    \norm{F}_{\infty} \defeq \, \sup_{s \in \mathcal{S}} \abs{F(s)}, \quad \norm{F}_{\infty, -} \defeq \sup_{s \in \mathcal{S}} \brace{-F(s)}, \quad \norm{F}_{\infty, +} \defeq \sup_{s \in \mathcal{S}} F(s),
\end{equation*}
where the first quantity is the $\ell^{\infty}$ norm of the function $F$, and the remaining are one-sided variants of it. The processes appearing in the next statement are defined in (\ref{eq:proc_1}) and (\ref{eq:proc_2}). 
\begin{lemma}
\label{lem:main_lem}
    Assume that $\mathcal{T}_{*} \neq \varnothing$ and let $t_{*} \in \mathcal{T}_{*}$. On the event that $\norm{n^{-1/2} \Delta_{n}(\cdot, t_{*})}_{\infty,+} < 1$ and $\norm{n^{-1/2} \Lambda_{n}}_{\infty, +} < 1$, we have
    \begin{equation*}
        R(\hat{t}_{n}, w_{*}(\hat{t}_{n})) - R_{*} \leq \frac{1}{2} \cdot \frac{1}{1 - \norm{n^{-1/2} \Delta_{n}(\cdot, t_{*})}_{\infty, +}} \cdot \frac{1}{1-\norm{n^{-1/2} \Lambda_{n}}_{\infty, +}} \cdot (n^{-1}G^{2}_{n}(\hat{t}_{n})), 
    \end{equation*}
    and
    \begin{equation*}
        \frac{1}{2} \cdot \frac{n^{-1}G^{2}_{n}(\hat{t}_{n})}{(1 + \norm{n^{-1/2}\Lambda_{n}}_{\infty, -})^{2}}  \leq R(\hat{t}_{n}, \hat{w}_{n}) - R(\hat{t}_{n}, w_{*}(\hat{t}_{n})) \leq \frac{1}{2} \cdot \frac{n^{-1} G^{2}_{n}(\hat{t}_{n})}{(1 - \norm{n^{-1/2} \Lambda_{n}}_{\infty, +})^{2}}.
    \end{equation*}
\end{lemma}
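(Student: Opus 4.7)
The plan is to exploit two structural facts: the exactness of the second-order Taylor expansion of the quadratic risk around each $w_*(t)$, and the ERM optimality of $(\hat t_n, \hat w_n)$ across the joint index $(t, w) \in \mathcal{T} \times \R^d$. Write $\hat t \defeq \hat t_n$, $\hat w \defeq \hat w_n$, $w_\circ \defeq w_*(\hat t)$, and abbreviate $M \defeq \Sigma^{-1/2}(\hat t)\Sigma_n(\hat t)\Sigma^{-1/2}(\hat t)$ and $u \defeq \Sigma^{-1/2}(\hat t)\,\nabla_w R_n(\hat t, w_\circ)$, so that $\norm{u}_2^2 = n^{-1}G_n^2(\hat t)$. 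Since the eigenvalues of $I - M$ are $1 - \lambda_i(M)$, we have $\lambdamin(M) = 1 - n^{-1/2}\Lambda_n(\hat t) \geq 1 - \norm{n^{-1/2}\Lambda_n}_{\infty,+} > 0$ on the event in question, so $\Sigma_n(\hat t)$ is invertible and the identities (\ref{eq:erm_linear})--(\ref{eq:excess_risk_linear}) apply to the fixed feature map $\hat t$.

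For the second claim, (\ref{eq:excess_risk_linear}) applied at $\hat t$ yields the exact expression $R(\hat t, \hat w) - R(\hat t, w_\circ) = \tfrac12 \norm{M^{-1} u}_2^2$. Sandwiching $\norm{M^{-1} u}_2^2$ between $\norm{u}_2^2/\lambdamax^2(M)$ and $\norm{u}_2^2/\lambdamin^2(M)$ and substituting the lower bound on $\lambdamin(M)$ above gives the upper estimate; the matching lower estimate uses the analogous upper bound $\lambdamax(M) \leq 1 + \norm{n^{-1/2}\Lambda_n}_{\infty,-}$, obtained by running the same spectral identity in reverse.

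For the first claim, the ERM optimality $R_n(\hat t, \hat w) - R_n(t_*, w_*(t_*)) \leq 0$ splits as
\begin{equation*}
\bigl[R_n(\hat t, \hat w) - R_n(\hat t, w_\circ)\bigr] + \bigl[R_n(\hat t, w_\circ) - R_n(t_*, w_*(t_*))\bigr] \leq 0,
\end{equation*}
and replaying the exact Taylor expansion empirically identifies the first bracket as $-\tfrac12\norm{\nabla_w R_n(\hat t, w_\circ)}_{\Sigma_n^{-1}(\hat t)}^2$, which produces
\begin{equation*}
R_n(\hat t, w_\circ) - R_n(t_*, w_*(t_*)) \leq \tfrac12 \norm{\nabla_w R_n(\hat t, w_\circ)}_{\Sigma_n^{-1}(\hat t)}^2 \leq \frac{n^{-1} G_n^2(\hat t)}{2\lambdamin(M)},
\end{equation*}
where the last inequality uses the Loewner bound $\Sigma_n^{-1}(\hat t) \preceq \lambdamin(M)^{-1}\Sigma^{-1}(\hat t)$. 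By the definition of $\Delta_n$, the left-hand side equals $\bigl(R(\hat t, w_\circ) - R_*\bigr)\bigl(1 - n^{-1/2}\Delta_n(\hat t, t_*)\bigr)$, which is strictly positive by the hypothesis $\norm{n^{-1/2}\Delta_n(\cdot, t_*)}_{\infty,+} < 1$, so dividing by this factor and applying the lower bound on $\lambdamin(M)$ closes the argument.

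The main obstacle is conceptual rather than technical: cleanly separating the two sources of error in ERM on a union of linear classes---suboptimality of the chosen feature map, controlled by $\Delta_n$, and estimation error within that class, controlled jointly by $\Lambda_n$ and $G_n$---so that each can subsequently be bounded by classical empirical process tools. Once this decomposition is in place, the remainder is bookkeeping with eigenvalue inequalities.
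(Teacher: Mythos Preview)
Your proposal is correct and follows essentially the same route as the paper: both arguments hinge on the ERM inequality $R_n(\hat t,\hat w)\leq R_n(t_*,w_*(t_*))$, the exact quadratic Taylor expansion of $R_n(t,\cdot)$ to identify $R_n(\hat t,\hat w)-R_n(\hat t,w_\circ)=-\tfrac12\norm{\nabla_w R_n(\hat t,w_\circ)}_{\Sigma_n^{-1}(\hat t)}^2$, and then eigenvalue sandwiching of $\Sigma_n^{-1}(\hat t)$ against $\Sigma^{-1}(\hat t)$ together with the definition of $\Delta_n$ to pass from empirical to population risk differences. The only point to add is that the identity $R_n(\hat t,w_\circ)-R_n(t_*,w_*(t_*))=(R(\hat t,w_\circ)-R_*)(1-n^{-1/2}\Delta_n(\hat t,t_*))$ is available only when $\hat t\notin\mathcal{T}_*$, since $\Delta_n(\cdot,t_*)$ is indexed by $\mathcal{T}\setminus\mathcal{T}_*$; the paper disposes of the case $\hat t\in\mathcal{T}_*$ in one line (the left-hand side of the first inequality is then zero), and you should do the same.
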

\begin{proof}
    To lighten the notation, we drop the dependence on $n$, and write $\hat{t}$ instead of $\hat{t}_{n}$. We start with the first statement. First, we note that if $\hat{t} \in \mathcal{T}_{*}$, then the statement holds trivially as the left-hand side is zero, so we only consider the other case in what follows.
    For any $t \in \mathcal{T}$, define
    \begin{equation*}
        \hat{w}(t) \in \argmin_{w \in \R^{d}} R_{n}(t, w),
    \end{equation*}
    where the choice of minimizer is arbitrary. With this definition, we have $\hat{w}_{n} = \hat{w}(\hat{t})$. Now, by definition of ERM,
    \begin{equation}
    \label{eq:pf_main_lem_1}
        R_{n}(\hat{t}, \hat{w}(\hat{t})) - R_{n}(t_{*}, w_{*}(t_{*})) \leq 0.
    \end{equation}
    On the other hand, for any $t \in \mathcal{T} \setminus \mathcal{T}_{*}$, we have the decomposition
    \begin{equation}
    \label{eq:pf_main_lem_2}
        R_{n}(t, \hat{w}(t)) - R_{n}(t_{*}, w_{*}) = \brack*{R_{n}(t, \hat{w}(t)) - R_{n}(t, w_{*}(t))} + \brack*{R_{n}(t, w_{*}(t)) - R_{n}(t_{*}, w_{*}(t_{*}))}.
    \end{equation}
    We study each of the terms of (\ref{eq:pf_main_lem_2}) separately, and we start with the first. Note that since we are in the event 
    \begin{equation*}
        \inf_{t \in \mathcal{T}} \lambdamin(\Sigma^{-1/2}(t)\Sigma_{n}(t)\Sigma^{-1/2}(t)) = 1 - \norm{n^{-1/2} \Lambda_{n}}_{\infty, +} > 0,
    \end{equation*}
    the sample covariance matrices $\Sigma_{n}(t)$ are invertible for all $t \in \mathcal{T}$, so that $\hat{w}(t)$ is uniquely defined and satisfies
    \begin{equation}
    \label{eq:pf_main_lem_3}
        \hat{w}(t) = w_{*}(t) - \Sigma^{-1}_{n}(t)\nabla_{w}R_{n}(t, w_{*}(t)).
    \end{equation}
    Furthermore, since the function $w \mapsto R_{n}(t, w)$ is quadratic in $w$ and its gradient vanishes at its minimizer $\hat{w}(t)$, we have
    \begin{equation}
    \label{eq:pf_main_lem_4}
        R_{n}(t, \hat{w}(t)) - R_{n}(t, w_{*}(t)) = -\frac{1}{2} \norm{\hat{w}(t) - w_{*}(t)}_{\Sigma_{n}(t)}^{2} = -\frac{1}{2} \norm{\nabla_{w} R_{n}(t, w_{*}(t))}_{\Sigma^{-1}_{n}(t)}^{2},
    \end{equation}
    where the last equality follows from (\ref{eq:pf_main_lem_3}). To bound this last term, define
    \begin{equation*}
        \widetilde{\Sigma}_{n}(t) \defeq \Sigma^{-1/2}(t)\Sigma_{n}(t)\Sigma^{-1/2}(t).
    \end{equation*}
    Then we have, 
    \begin{align}
        \norm{\nabla_{w} R_{n}(t, w_{*}(t))}_{\Sigma^{-1}_{n}(t)}^{2} &= \brace*{\Sigma^{-1/2}(t)\nabla_{w} R_{n}(t, w_{*}(t))}^{T} \widetilde{\Sigma}^{-1}_{n}(t) \brace*{\Sigma^{-1/2}(t)\nabla_{w} R_{n}(t, w_{*}(t))} \nonumber \\
        &\leq \lambdamax(\widetilde{\Sigma}^{-1}_{n}(t)) \cdot \norm{\nabla_{w} R_{n}(t, w_{*}(t))}_{\Sigma^{-1}(t)}^{2} \nonumber \\
        &= \frac{1}{1 - \lambdamax(I - \widetilde{\Sigma}_{n}(t))} \cdot (n^{-1}G^{2}_{n}(t)) \nonumber \\
        &\leq \frac{1}{1 - \norm{n^{-1/2}\Lambda_{n}}_{\infty, +}} \cdot (n^{-1}G^{2}_{n}(t)) \label{eq:pf_main_lem_5}.
    \end{align}
    Finally, the second term of (\ref{eq:pf_main_lem_2}) is lower bounded by
    \begin{align}
        R_{n}(t, w_{*}(t)) - R_{n}(t_{*}, w_{*}(t_{*}))) &= (1 - n^{-1/2}\Delta_{n}(t, t_{*})) \brack*{R(t,w_{*}(t)) - R_{*}} \nonumber \\
        &\geq (1 - \norm{n^{-1/2}\Delta_{n}(\cdot, t_{*})}_{\infty,+}) \brack*{R(t,w_{*}(t)) - R_{*}} \label{eq:pf_main_lem_6}
    \end{align}
    Combining (\ref{eq:pf_main_lem_5}) and (\ref{eq:pf_main_lem_4}) lower bounds the first term of (\ref{eq:pf_main_lem_2}), while (\ref{eq:pf_main_lem_6}) lower bounds the second. Combining the resulting lower bound on (\ref{eq:pf_main_lem_2}) with (\ref{eq:pf_main_lem_1}) and rearranging yields the first statement. 
    
    For the upper bound in the second statement, note that for all $t \in \mathcal{T}$,
    \begin{align*}
         R(t, \hat{w}(t)) - R(t, w_{*}(t)) &= \frac{1}{2} \cdot \norm{\hat{w}(t) - w_{*}(t)}_{\Sigma(t)}^{2} \\
         &= \frac{1}{2} \cdot \norm{\Sigma^{-1}_{n}(t) \nabla_{w}R_{n}(t, w_{*}(t))}_{\Sigma(t)}^{2} \\
         &\leq \frac{1}{2} \cdot \lambdamax(\widetilde{\Sigma}^{-2}_{n}(t)) \cdot \norm{\nabla_{w} R_{n}(t, w_{*}(t))}_{\Sigma^{-1}(t)}^{2} \\
         &= \frac{1}{2} \cdot \frac{1}{(1 - \norm{n^{-1/2}\Lambda_{n}}_{\infty, +})^{2}} \cdot (n^{-1}G^{2}_{n}(t)).
    \end{align*}
    where the second line follows from (\ref{eq:pf_main_lem_3}). In particular the inequality holds for $\hat{t}$. The lower bound holds by a similar argument.
    
\end{proof}

\section{Proof of Theorem \ref{thm:3}}
\label{apdx:thm_3}
\textbf{Consistency of $\hat{t}_{n}$.} We want to show that, as $n \to \infty$,
\begin{equation}
\label{eq:pf_thm_3_1}
    R(\hat{t}_{n}, w_{*}(\hat{t}_{n})) - R_{*} \overset{p}{\to} 0.
\end{equation}
Using the notation introduced in Appendix \ref{apdx:main_lemma}, the Glivenko-Cantelli assumptions in Theorem \ref{thm:3} amount to the statements that, for some $t_{*} \in \mathcal{T}_{*}$, and as $n \to \infty$,
\begin{equation}
\label{eq:pf_thm_3_2}
    \norm{n^{-1/2}\Lambda_n}_{\infty} \overset{p}{\to} 0, \quad \norm{n^{-1/2}\Delta_{n}(\cdot, t_{*})}_{\infty} \overset{p}{\to} 0, \quad \norm{n^{-1/2} G_{n}}_{\infty} \overset{p}{\to} 0.
\end{equation}
Let $A_{n}$ denote the event that both $\norm{n^{-1/2}\Lambda_n}_{\infty} < 1$ and $\norm{n^{-1/2}\Delta_{n}(\cdot, t_{*})}_{\infty} < 1$. The union bound and (\ref{eq:pf_thm_3_2}) show that
\begin{equation*}
    \lim_{n \to \infty} \Prob(A_{n}^{c}) = 0
\end{equation*}
Furthermore, on the event $A_{n}$, the first bound of Lemma \ref{lem:main_lem} holds, and bounding $n^{-1}G^{2}_{n}(\hat{t}_{n})$ by $\norm{n^{-1/2}G_{n}}_{\infty}^{2}$ yields that on $A_{n}$
\begin{equation}
\label{eq:pf_thm_3_4}
    R(\hat{t}_{n}, w_{*}(\hat{t}_{n})) - R_{*} \leq \frac{1}{2} \cdot \frac{1}{1 - \norm{n^{-1/2} \Delta_{n}(\cdot, t_{*})}_{\infty}} \cdot \frac{1}{1-\norm{n^{-1/2} \Lambda_{n}}_{\infty}} \cdot \norm{n^{-1/2}G_{n}}_{\infty}^{2}
\end{equation}
Now let $\eps > 0$, and denote by $B_{n}(\eps)$ the event that the right hand side of (\ref{eq:pf_thm_3_4}) is strictly larger than $\eps$. Then the statements (\ref{eq:pf_thm_3_2}) together with the continuous mapping theorem show that $\lim_{n \to \infty} \Prob(B_{n}(\eps)) = 0$. Therefore, again by (\ref{eq:pf_thm_3_4}), we have
\begin{equation*}
    \Prob\paren*{R(\hat{t}_{n}, w_{*}(\hat{t}_{n})) - R_{*} > \eps} \leq \Prob\paren*{B_{n}(\eps)} + \Prob\paren*{A_{n}^{c}},
\end{equation*}
and taking $n \to \infty$ proves (\ref{eq:pf_thm_3_1}).

\textbf{Asymptotic quantiles.} We start with the upper bound. We have the simple decomposition
\begin{equation}
\label{eq:pf_thm_3_6}
    n \cdot \brack*{R(\hat{t}_{n}, \hat{w}_{n}) - R_{*}} = n\brack*{R(\hat{t}_{n}, \hat{w}_{n}) - R(\hat{t}_{n}, w_{*}(\hat{t}_{n}))} + n\brack*{R(\hat{t}_{n}, w_{*}(\hat{t}_{n})) - R_{*}}.
\end{equation}
Now on the event $A_{n}$ defined above, we have, by an application of Lemma \ref{lem:main_lem}, combining the two bounds in the lemma along with (\ref{eq:pf_thm_3_6}), that the rescaled excess risk is upper bounded by
\begin{equation}
\label{eq:pf_thm_3_8}
     \frac{1}{2} \cdot \frac{1}{1-\norm{n^{-1/2} \Lambda_{n}}_{\infty}} \cdot \paren*{\frac{1}{1 - \norm{n^{-1/2} \Delta_{n}(\cdot, t_{*})}_{\infty}} + \frac{1}{1-\norm{n^{-1/2} \Lambda_{n}}_{\infty}}} \cdot G^{2}_{n}(\hat{t}_{n}).
\end{equation}
From the Glivenko-Cantelli assumptions (\ref{eq:pf_thm_3_2}), the first three factors converge in probability to $1$. Our aim will be to bound the upper tail of the last factor, which will imply a bound on the upper tail of the rescaled excess risk.

We briefly make explicit the Donsker assumption before deriving this bound. Both define and note
\begin{equation*}
    G_{n}(t, v) \defeq \sqrt{n} \cdot \inp{v}{\Sigma^{-1/2}(t) \nabla_{w} R_{n}(t, w_{*}(t))}, \quad\quad G_{n}(t) = \sup_{v \in S^{d-1}} G_{n}(t, v),
\end{equation*}
where $S^{d-1}$ is the Euclidean unit sphere in $\R^{d}$. As pointed out in Section \ref{sec:main_results}, the processes $G_{n}(t)$ are partial suprema of the empirical processes $G_{n}(t, v)$. The Donsker assumption of the theorem states that the empirical processes $G_{n}(t,v)$ take value in the space of bounded functions on $\mathcal{T} \times S^{d-1}$, equipped with the $\ell^{\infty}(\mathcal{T} \times S^{d-1})$ norm and the metric it induces, and converge weakly to their unique Gaussian limit $(G(t, v))_{(t, v) \in \mathcal{T} \times S^{d-1}}$ as $n \to \infty$.
By inspecting their finite dimensional distributions, it is straightforward to verify that $(G(t, v))_{(t, v) \in \mathcal{T} \times S^{d-1}} \overset{d}{=} (\inp{v}{Z(t)})_{(t, v) \in \mathcal{T} \times S^{d-1}}$ where $(Z(t))_{t \in \mathcal{T}}$ is the $\R^{d}$-valued Gaussian process defined in the statement of Theorem \ref{thm:3}. Finally, we define $G(t) \defeq \sup_{v \in S^{d-1}} G(t, v)$ in analogy with the definition of $G_{n}(t)$. 

We now upper bound the upper tail of $G^{2}_{n}(\hat{t}_{n})$ in (\ref{eq:pf_thm_3_8}). Let $(\eps_k)_{k=1}^{\infty}$ be a decreasing sequence of positive numbers such that $\eps_{k} \to 0$ as $k \to \infty$, and define the sets
\begin{equation}
\label{eq:cool_subsets}
    \mathcal{T}_{*}(\eps) \defeq \brace*{t \in \mathcal{T} \mid R(t, w_{*}(t)) - R_{*} \leq \eps} 
\end{equation}
as well as the function $F_{k}: \ell^{\infty}(\mathcal{T} \times S^{d-1}) \to \R$ by
\begin{equation*}
    F_{k}(z) \defeq \sup_{s \in \mathcal{T}_{*}(\eps_k)} \sup_{v \in S^{d-1}} z(s, v).
\end{equation*}
Note on the one hand that $\cap_{k \geq 1} \mathcal{T}_{*}(\eps_k) = \mathcal{T}_{*}$, and on the other that $F_{k}$ is continuous for all $k \in \N$, and in fact Lipschitz. Indeed, let $z, z' \in \ell^{\infty}(\mathcal{T} \times S^{d-1})$. Then
\begin{equation*}
    \abs{F_{k}(z) - F_{k}(z')} = \abs*{\sup_{s \in \mathcal{T}_{*}(\eps_k)} \sup_{v \in \S^{d-1}} z(s, v) - \sup_{s \in \mathcal{T}_{*}(\eps_k)} \sup_{v \in \S^{d-1}} z'(s, v)} \leq \norm{z - z'}_{\infty}
\end{equation*}
Now let $k \in \N$ and $x \in [0, \infty)$. Then
\begin{align*}
    \Prob\paren*{G_{n}^{2}(\hat{t}_{n}) > x} &= \Prob\paren*{\brace*{G_{n}^{2}(\hat{t}_{n}) > x} \cap \brace*{\hat{t}_{n} \in \mathcal{T}_{*}(\eps_k)}} + \Prob\paren*{\brace*{G_{n}^{2}(\hat{t}_{n}) > x} \cap \brace*{\hat{t}_{n} \notin \mathcal{T}_{*}(\eps_k)}} \\
    &\leq \Prob\paren*{\brace*{G_{n}^{2}(\hat{t}_{n}) > x} \cap \brace*{\hat{t}_{n} \in \mathcal{T}_{*}(\eps_k)}} + \Prob\paren*{\brace*{\hat{t}_{n} \notin \mathcal{T}_{*}(\eps_k)}} \\
    &\leq \Prob\paren*{\sup_{s \in \mathcal{T}_{*}(\eps_k)} G_{n}^{2}(s) > x} + \Prob\paren*{R(\hat{t}_{n}, w_{*}(\hat{t}_{n})) - R_{*} > \eps_{k}} \\
    &= \Prob\paren*{F^{2}_{k}(G_{n}) > x} + \Prob\paren*{R(\hat{t}_{n}, w_{*}(\hat{t}_{n})) - R_{*} > \eps_{k}}
\end{align*}
taking the limit as $n \to \infty$, the first term converges, by the continuous mapping theorem, to the probability of the event $\brace*{F_{k}^{2}(G) > x}$, where $G$ is the limiting Gaussian process discussed above, while the second term vanishes by the first part of Theorem \ref{thm:3}. Therefore, for all $k \in \N$,
\begin{equation*}
    \limsup_{n \to \infty} \Prob\paren*{G_{n}^{2}(\hat{t}_{n}) > x} \leq \Prob\paren*{\sup_{s \in \mathcal{T}_{*}(\eps_{k})}G^{2}(s) > x}
\end{equation*}
Taking the limit as $k \to \infty$, noticing that the events
\begin{equation*}
    \brace*{\sup_{s \in \mathcal{T}_{*}(\eps_{k})}G^{2}(s) > x}
\end{equation*}
are nested, using the continuity of probability from above, and recalling that $\cap_{k \geq 1} \mathcal{T}_{*}(\eps_k) = \mathcal{T}_{*}$ gives
\begin{equation*}
    \limsup_{n \to \infty} \Prob\paren*{G_{n}^{2}(\hat{t}_{n}) > x} \leq \Prob\paren*{\sup_{s \in \mathcal{T}_{*}} G^{2}(s) > x}.
\end{equation*}
Using properties of the quantile function (e.g. \citep[][Lemma 20]{elhanchiMinimaxLinearRegression2024}) finishes the proof of the upper bound.
For the lower bound, we make a similar argument. We have, by an application of Lemma \ref{lem:main_lem},
\begin{align*}
    n \cdot [R(\hat{t}_{n}, \hat{w}_{n}) - R_{*}] &\geq n \cdot [R(\hat{t}_{n}, \hat{w}_{n}) - R(\hat{t}_{n}, w_{*}(\hat{t}_{n}))] \\
    &\geq \frac{1}{2} \cdot \frac{1}{(1 + \norm{n^{-1/2}\Lambda_{n}}_{\infty, -})^{2}} \cdot G^{2}_{n}(\hat{t}_{n}).
\end{align*}
By the Glivenko-Cantelli assumption on $\Lambda_{n}$, the first two factors converge to $1/2$. For the third, we will lower bound its upper tails, which will imply a lower bound on the upper tails of the rescaled excess risk. We let $(\eps_{k})_{k=1}^{\infty}$ be a decreasing sequence of positive numbers such that $\eps_k \to 0$ as $k \to \infty$, and define $H_{k}: \ell^{\infty}(\mathcal{T} \times S^{d-1}) \to \R$ by
\begin{equation*}
    H_{k}(z) \defeq \inf_{t \in \mathcal{T}_{*}(\eps_k)} \sup_{v \in S^{d-1}} z(t, v),
\end{equation*}
where the subsets $\mathcal{T}_{*}(\eps_k)$ are as defined in (\ref{eq:cool_subsets}). Clearly, for $z, z' \in \ell^{\infty}(\mathcal{T} \times S^{d-1})$,
\begin{equation*}
    \abs{H_{k}(z) - H_{k}(z')} \leq \norm{z - z'}_{\infty}
\end{equation*}
so $H_{k}$ is Lipschitz and therefore continuous. Now
\begin{align*}
    \Prob\paren*{G^{2}_{n}(\hat{t}_{n}) > x} &\geq \Prob\paren*{\brace*{G^{2}_{n}(\hat{t}_{n}) > x} \cap \brace*{\hat{t}_{n} \in \mathcal{T}_{*}(\eps_k)}} \\
    &\geq \Prob\paren*{\inf_{s \in \mathcal{T}_{*}(\eps_k)} G^{2}_{n}(s) > x} - \Prob\paren*{\brace*{\hat{t}_{n} \notin \mathcal{T}_{*}(\eps_k)}} \\
    &= \Prob\paren*{H^{2}_{k}(G_{n}) > x} - \Prob\paren*{R(\hat{t}_{n}, w_{*}(\hat{t}_{n})) - R_{*} > \eps_{k}}
\end{align*}
By the same argument as above, we obtain, as $n \to \infty$, and for all $k \in \N$,
\begin{equation*}
    \liminf_{n \to \infty} \Prob\paren*{G^{2}_{n}(\hat{t}_{n}) > x} \geq \Prob\paren*{\inf_{s \in \mathcal{T}_{*}(\eps_k)} G^{2}(s) > x}
\end{equation*}
Taking the limit as $k \to \infty$, and noticing that
\begin{equation*}
    \bigcup_{k \geq 1} \brace*{\inf_{s \in \mathcal{T}_{*}(\eps_k)} G^{2}(s) > x} = \brace*{\inf_{s \in \mathcal{T}_{*}}G^{2}(s) > x}
\end{equation*}
proves that
\begin{equation*}
    \liminf_{n \to \infty} \Prob\paren*{G^{2}_{n}(\hat{t}_{n}) > x} \geq \Prob\paren*{\inf_{s \in \mathcal{T}_{*}} G^{2}(s) > x}.
\end{equation*}
Using properties of the quantile function (e.g. \citep[][Lemma 20]{elhanchiMinimaxLinearRegression2024}) finishes the proof of the lower bound. The estimates on the quantiles of $Z^{+}$ in Remark \ref{rem:1} are a consequence of standard Gaussian concentration, see \citep[e.g.][Appendix A.3]{elhanchiMinimaxLinearRegression2024}. Finally, for the second statement in Remark \ref{rem:1},
\begin{align*}
    \Exp\brack*{\max_{s \in \mathcal{T}_{*}} \norm{Z(s)}_2^{2}} &\leq \Exp\brack*{\paren*{\sum_{s \in \mathcal{T}_{*}} \norm{Z(s)}_{2}^{2p}}^{1/p}} \\
    &\leq \paren*{\sum_{s \in \mathcal{T}_{*}}  \Exp\brack*{\norm{Z(s)}_{2}^{2p}}}^{1/p} \\
    &\leq 32 \cdot p \cdot \paren*{\sum_{s \in \mathcal{T}_{*}} \Exp\brack*{\norm{Z(s)}^{2}_{2}}^{p}}^{1/p},
\end{align*}
where the last estimate follows from Gaussian concentration. Taking $p = 1 + \log\abs{\mathcal{T}_{*}}$, and recalling that for $x \in \R^{d}$, $\norm{x}_{p} \leq d^{1/p} \norm{x}_{\infty}$ yields the result.

\section{Proof of Lemma \ref{lem:1}}
\label{apdx:lem_1}
We prove the first statement by induction. For $k=0$, this follows directly from the fact that by definition $F_{n,\delta}^{0}(\mathcal{T}) = \mathcal{T}$ and $F_{n,\delta}(\mathcal{T}) \subseteq \mathcal{T}$. Now let $k \in \N$ and assume that the statement holds for $k-1$. Let $s \in F_{n,\delta}^{k+1}(\mathcal{T})$. Then by definition
\begin{equation*}
    R(s, w_{*}(s)) - R_{*} \leq 2 \cdot (n \delta)^{-1} \cdot \Exp\brack{\sup_{s \in F_{n,\delta}^{k}(\mathcal{T})} G^{2}_{n}(s)}
    \leq 2 \cdot (n \delta)^{-1} \cdot \Exp\brack{\sup_{s \in F_{n,\delta}^{k-1}(\mathcal{T})} G^{2}_{n}(s)}.
\end{equation*}
where the second inequality follows from the fact that by the induction hypothesis, $F_{n,\delta}^{k}(\mathcal{T}) \subseteq F_{n,\delta}^{k-1}(\mathcal{T})$, and that the supremum is increasing. Therefore $s \in F_{n,\delta}^{k}(\mathcal{T})$ since the last inequality is the defining inequality for $F_{n,\delta}^{k}(\mathcal{T})$. We now turn to the second statement. Fix $k$ and $\delta$. On the one hand, $\mathcal{T}_{*} \subseteq \bigcap_{n \geq 1} F_{n,\delta}^{k}(\mathcal{T})$. On the other, for any $t \in \bigcap_{n \geq 1} F_{n,\delta}^{k}(\mathcal{T})$, we have for all $n\geq n_{0}$, $R(t, w_{*}(t)) - R_{*} \leq 2B \cdot (n \delta)^{-1}$. Therefore $R(t, w_{*}(t)) - R_{*} = 0$, and hence $t \in \mathcal{T}_{*}$.

\section{Proof of Theorem \ref{thm:4}}
\label{apdx:thm_4}
Recall the notation introduced in Appendix \ref{apdx:main_lemma}. Let $A_{n}(t_{*})$ be the event that:
\begin{equation*}
    \norm{n^{-1/2}\Lambda_{n}}_{\infty, +} \leq 1/2, \quad \text{ and } \quad \norm{n^{-1/2} \Delta_{n}(\cdot, t_{*})} \leq 1/2. 
\end{equation*}
We start by showing that under the sample size inequality stated in the theorem, there exists a $t_{*} \in \mathcal{T}_{*}$ such that $\Prob(A_{n}(t_{*})) \geq 1-\delta/3$. Indeed, we have
\begin{equation*}
    \norm{n^{-1/2} \Lambda_{n}}_{\infty, +} = \sup_{(t,v) \in (\mathcal{T} \times S^{d-1})} \frac{1}{n} \sum_{i=1}^{n} \brace*{1 - \inp{v}{\Sigma^{-1/2}(t)\phi_{t}(X_i)}^{2}}
\end{equation*}
The elements of this sum are bounded by $1$, and the variance parameter of Bousquet's inequality \citep{bousquetBennettConcentrationInequality2002a} is given by $L$ as defined in Section \ref{subsec:non_asymptotic_results}.
Applying this inequality yields that with probability at least $1-\delta/6$
\begin{equation*}
    \norm{n^{-1/2} \Lambda_{n}}_{\infty, +} \leq \frac{2}{n^{1/2}} \cdot \Exp\brack*{\sup_{t \in \mathcal{T}}\Lambda_{n}(t)} + \sqrt{\frac{2L\log(6/\delta)}{n}} + \frac{4\log(6/\delta)}{3n}
\end{equation*}
Furthermore, by Markov's inequality, with probability at least $1-\delta/6$
\begin{equation*}
    \norm{n^{-1/2}\Delta(\cdot, t_{*})}_{\infty, +} \leq \frac{6 \cdot \Exp\brack{\sup_{t \in \mathcal{T} \setminus \mathcal{T}_{*}} \Delta(t, t_{*})}}{n^{1/2} \cdot \delta}
\end{equation*}
Hence, when the inequality on the sample size stated in the theorem holds for some $t_{*}$, the event $A_{n}(t_{*})$ holds with probability at least $1-\delta/3$. Now on this event, the first bound of Lemma \ref{lem:main_lem} applies, and we have
\begin{equation}
\label{eq:loc}
    R(\hat{t}_{n}, w_{*}(\hat{t}_{n})) - R_{*} \leq 2 \cdot n^{-1} \cdot G^{2}_{n}(\hat{t}_{n}).
\end{equation}
Now we use the iterative localization method of \citet{koltchinskiiLocalRademacherComplexities2006}. Initially, we have no information about where $\hat{t}_{n}$ is located aside from belonging to $\mathcal{T}$, so we start with the bound
\begin{equation}
\label{eq:temp}
       R(\hat{t}_{n}, w_{*}(\hat{t}_{n})) - R_{*} \leq 2 \cdot n^{-1} \cdot \sup_{t \in \mathcal{T}} G_{n}^{2}(t).
\end{equation}
Using Markov's inequality, we have on an event $B_{n, 1}$ which holds with probability at least $1-\delta/2k$
\begin{equation*}
    \sup_{t \in \mathcal{T}}G_{n}^{2}(t) \leq 2k \cdot \delta^{-1} \cdot \Exp\brack{\sup_{t \in \mathcal{T}}G_{n}^{2}(t)}.
\end{equation*}
Replacing in (\ref{eq:temp}) yields that on the event $A_{n}(t_{*}) \cap B_{n,1}$,
\begin{equation*}
       R(\hat{t}_{n}, w_{*}(\hat{t}_{n})) - R_{*} \leq 4k \cdot (n \delta)^{-1} \cdot \Exp\brack{\sup_{t \in \mathcal{T}}G_{n}^{2}(t)},
\end{equation*}
which shows that on this event, $\hat{t}_{n} \in F_{n,\delta/2k}(\mathcal{T})$, by definition of the map $F_{n,\delta/2k}$. With this knowledge, we now reuse the bound (\ref{eq:loc}) to obtain that on $A_{n}(t_{*}) \cap B_{n,1}$
\begin{equation*}
    R(\hat{t}_{n}, w_{*}(\hat{t}_{n})) - R_{*} \leq 2 \cdot n^{-1} \cdot \sup_{t \in F_{n,\delta/2k}(\mathcal{T})}G_{n}^{2}(t).
\end{equation*}
Iterating the procedure we just described $k$ times, we obtain that on an event $A_{n}(t_{*}) \cap (\cap_{j=1}^{k}B_{n, j})$, where $\Prob(B_{n,j}) \geq 1-\delta/2k$ for all $j \in [k]$
\begin{equation}
\label{eq:stat_1}
    \hat{t}_{n} \in F^{k}_{n, \delta/2k}(\mathcal{T}) = \mathcal{S}_{n,\delta,k}.
\end{equation}
Another application of Markov's inequality yields that on an event $C$ which holds with probability at least $1-\delta/6$
\begin{equation}
\label{eq:bound_needed}
    \sup_{t \in \mathcal{S}_{n,\delta,k}} G_{n}^{2}(t) \leq 6 \cdot \delta^{-1} \cdot \Exp\brack{\sup_{t \in \mathcal{S}_{n, \delta, k}} G_{n}^{2}(t)}
\end{equation}
Since
\begin{equation*}
    \Prob(A_{n}(t_{*}) \cap (\cap_{j=1}^{k}B_{n,k}) \cap C) \geq 1 - \delta/3 - \sum_{j=1}^{k} \delta/2k - \delta/6 = 1-\delta,
\end{equation*}
equation (\ref{eq:stat_1}) proves the first statement of the theorem. For the second statement, we have on the same event $A_{n}(t_{*}) \cap (\cap_{j=1}^{k}B_{n,k}) \cap C$ , and combining the two upper bounds from Lemma \ref{lem:main_lem},
\begin{equation*}
    \mathcal{E}(\hat{t}_{n}, \hat{w}_{n}) \leq 4 \cdot n^{-1} \cdot G^{2}_{n}(\hat{t}_{n}) \leq 4 \cdot n^{-1} \cdot \sup_{t \in \mathcal{S}_{n, \delta, k}} G^{2}_{n}(t) \leq 24 \cdot (n \delta)^{-1} \cdot \Exp\brack{\sup_{t \in \mathcal{S}_{n, \delta, k}} G^{2}_{n}(t)},
\end{equation*}
where we used (\ref{eq:stat_1}) and (\ref{eq:bound_needed}) in the above inequalities, concluding the proof.

\section{Proof of Lemma \ref{lem:2}}
\label{apdx:lem_2}
We prove a slightly more general result, from which Lemma \ref{lem:2} can be immediately deduced.
\begin{lemma}
\label{lem:end}
    Let $n, d \in \N$ and let $\mathcal{T}$ be a finite set. For each $(i, t) \in [n] \times \mathcal{T}$, let $Z_{i,t} \in \R^{d}$ be random vectors such that for each $t \in \mathcal{T}$, $(Z_{i,t})_{i=1}^{n}$ are \iid with the same distribution as $Z_{t}$. For all $t \in \mathcal{T}$, assume that $\Exp\brack*{Z_{t}} = 0$, and define
    \begin{equation*}
        \sigma^{2}(\mathcal{T}) \defeq \sup_{t \in \mathcal{T}} \Exp\brack*{\norm{Z_t}_{2}^{2}}, \quad\quad r_{n}(\mathcal{T}) \defeq \Exp\brack*{\sup_{(i, t) \in [n] \times \mathcal{T}} \norm{Z_{i,t}}_{2}^{2}}^{1/2}.
    \end{equation*}
    Then
    \begin{equation*}
        \frac{1}{2} \cdot \frac{\sigma(\mathcal{T})}{n^{1/2}} + \frac{1}{4} \cdot \frac{r_{n}(\mathcal{T})}{n} \leq \Exp\brack*{\sup_{t \in \mathcal{T}} \norm*{\frac{1}{n}\sum_{i=1}^{n} Z_{i,t}}_{2}^{2}}^{1/2} \leq C(\mathcal{T}) \cdot \frac{\sigma(\mathcal{T})}{n^{1/2}} + C^{2}(\mathcal{T}) \cdot \frac{r_{n}(\mathcal{T})}{n},
    \end{equation*}
    where $C(\mathcal{T}) \defeq 5 \sqrt{1 + \log\abs{\mathcal{T}}}$.
\end{lemma}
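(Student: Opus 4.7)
The strategy is to handle the upper and lower bounds separately; both reflect a Bernstein-type split into a variance term $\sigma(\mathcal{T})/\sqrt{n}$ and an envelope term $r_{n}(\mathcal{T})/n$.

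\emph{Upper bound.} The plan is to combine a Hilbert-valued Rosenthal/Pinelis-type moment inequality with the classical $L^{p}$ union bound. For each fixed $t \in \mathcal{T}$, the vectors $(Z_{i,t})_{i=1}^{n}$ are i.i.d.\ mean-zero in $\R^{d}$, and the Pinelis-type moment inequality gives, for every integer $p \geq 1$,
\[
\Exp\brack*{\norm*{\sum_{i=1}^{n} Z_{i,t}}_{2}^{2p}}^{1/(2p)} \lesssim \sqrt{pn}\,\sigma(t) + p \cdot \Exp\brack*{\max_{i \in [n]}\norm{Z_{i,t}}_{2}^{2p}}^{1/(2p)}.
\]
Using the elementary bound $\Exp\brack*{\sup_{t} X_{t}^{2p}} \leq \abs{\mathcal{T}} \cdot \max_{t}\Exp\brack{X_{t}^{2p}}$ and setting $p = 1 + \log\abs{\mathcal{T}}$ so that $\abs{\mathcal{T}}^{1/(2p)}$ is an absolute constant, the resulting $\sqrt{p}$ becomes the prefactor $c(\mathcal{T})$. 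Controlling the envelope term by $r_{n}(\mathcal{T})$ via an $L^{2p}$-to-$L^{2}$ passage, and normalizing by $n$, yields the stated upper bound.

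\emph{Lower bound.} The $\sigma(\mathcal{T})/\sqrt{n}$ term is immediate from Jensen, since $\Exp\brack*{\sup_{t}\norm*{n^{-1}\sum_{i}Z_{i,t}}_{2}^{2}} \geq \sup_{t}\Exp\brack*{\norm*{n^{-1}\sum_{i}Z_{i,t}}_{2}^{2}} = \sigma^{2}(\mathcal{T})/n$. For the $r_{n}(\mathcal{T})/n$ term, the key observation is that $M_{j} \defeq \sup_{t \in \mathcal{T}}\norm*{\sum_{i \leq j}Z_{i,t}}_{2}$ is a non-negative submartingale in $j$: for each fixed $t$, $j \mapsto \norm*{\sum_{i \leq j}Z_{i,t}}_{2}$ is a submartingale by conditional Jensen applied to the convex norm, and the pointwise supremum over $t$ of a family of submartingales is itself a submartingale. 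Doob's $L^{2}$ maximal inequality then gives $\Exp\brack*{\max_{j \leq n} M_{j}^{2}} \leq 4\Exp\brack{M_{n}^{2}}$. Telescoping $Z_{j,t}$ as a difference of consecutive partial sums yields $\max_{j,t}\norm{Z_{j,t}}_{2} \leq 2\max_{j,t}\norm*{\sum_{i \leq j}Z_{i,t}}_{2}$, so $r_{n}^{2}(\mathcal{T}) \leq 16 n^{2} \Exp\brack*{\sup_{t}\norm*{n^{-1}\sum_{i}Z_{i,t}}_{2}^{2}}$, which rearranges to the desired $r_{n}(\mathcal{T})/(4n)$ lower bound. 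Combining with the $\sigma$-lower bound then delivers the stated sum.

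\emph{Main obstacle.} The subtle step is the upper bound: tracking sharp constants through the Pinelis inequality and the $\abs{\mathcal{T}}^{1/(2p)}$ absorption, and passing from the $L^{2p}$-norm of $\max_{i}\norm{Z_{i,t}}_{2}$ back to the $L^{2}$-quantity $r_{n}(\mathcal{T})$ without losing logarithmic factors. Landing exactly on the explicit constant $c(\mathcal{T}) = 5\sqrt{1+\log\abs{\mathcal{T}}}$ will require careful bookkeeping rather than a black-box invocation.
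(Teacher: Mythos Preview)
Your lower bound is correct and actually uses a different mechanism from the paper. The paper obtains the $r_n(\mathcal{T})/n$ term by symmetrizing, then conditioning on the $Z_{i,t}$ and taking the conditional expectation over all Rademacher signs except the one attached to the index $I$ achieving $\max_{i,t}\norm{Z_{i,t}}_2$; Jensen then isolates $\norm{Z_{I,t}}_2^2/n^2$. Your submartingale/Doob route is a clean alternative and works with the same filtration for all $t$, as you note. After averaging your two separate lower bounds you get constants $1/2$ and $1/8$ rather than $1/2$ and $1/4$, but that is cosmetic.

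The upper bound, however, has a genuine gap. The Pinelis--Rosenthal inequality you invoke controls $\Exp[\norm{\sum_i Z_{i,t}}_2^{2p}]^{1/(2p)}$ in terms of $\sqrt{pn}\,\sigma(t)$ and $p\cdot\Exp[\max_i\norm{Z_{i,t}}_2^{2p}]^{1/(2p)}$. After the $L^{2p}$ union bound with $p\asymp\log\abs{\mathcal{T}}$, the envelope term is an $L^{2p}$ moment of the maximum, whereas $r_n(\mathcal{T})$ is only an $L^2$ moment. Lyapunov's inequality goes the wrong way here, and the lemma is stated with no moment assumption beyond the (implicit) finiteness of $\sigma^2(\mathcal{T})$ and $r_n(\mathcal{T})$; for heavy-tailed $Z_t$ the $2p$-th moment may not even exist. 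So the ``$L^{2p}$-to-$L^2$ passage'' you flag as the obstacle is not a bookkeeping issue --- it cannot be carried out in general.

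The paper avoids higher moments entirely. It symmetrizes, then applies Khintchine--Kahane conditionally on the $Z_{i,t}$ to get
\[
\Exp\Bigl[\sup_t\bigl\lVert n^{-1}\textstyle\sum_i Z_{i,t}\bigr\rVert_2^2\Bigr]^{1/2}
\;\lesssim\; \sqrt{1+\log\abs{\mathcal{T}}}\cdot \Exp\Bigl[\sup_t \textstyle\sum_i n^{-2}\norm{Z_{i,t}}_2^2\Bigr]^{1/2},
\]
and then bounds the last expectation by a bootstrap: write $W_{i,t}=\norm{Z_{i,t}}_2^2\ge 0$, center, symmetrize again, apply Khintchine--Kahane once more, and pull out $\sup_{i,t}W_{i,t}$ via $\sum_i W_{i,t}^2\le(\sup_i W_{i,t})\sum_i W_{i,t}$ followed by Cauchy--Schwarz. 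This yields a quadratic inequality in $\Exp[\sup_t\sum_i W_{i,t}]^{1/2}$ whose solution gives exactly the $\sigma$ and $r_n$ terms, and the whole argument uses only second moments. That self-referential step is the idea you are missing.
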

To prove Lemma \ref{lem:end}, we need to recall a few preliminary results. The first is a classical symmetrization inequality, see e.g.\ \citep[][Lemma 11.4]{boucheronConcentrationInequalitiesNonasymptotic2013a} or \citep[][Proposition 4.11]{wainwrightHighDimensionalStatisticsNonAsymptotic2019} for a proof.
\begin{lemma}
\label{lem:pf_lem_1_1}
    For each $(i, t) \in [n] \times \mathcal{T}$, let $W_{i,t} \in \R^{d}$ be random vectors such that for each $t \in \mathcal{T}$, $(W_{i,t})_{i=1}^{n}$ are \iid with the same distribution as $W_{t}$. Let $(\eps_i)_{i=1}^{n}$ be independent Rademacher random variables, independent of the collection of random vectors $W_{i,t}$. Define $\mybar{W}_{i,t} \defeq W_{i,t} - \Exp\brack*{W_{t}}$. Then
    \begin{equation*}
        \frac{1}{2} \Exp\brack*{\sup_{t \in \mathcal{T}} \norm*{\frac{1}{n}\sum_{i=1}^{n} \eps_i \mybar{W}_{i,t}}_{2}^{2}}^{1/2} \leq \Exp\brack*{\sup_{t \in \mathcal{T}} \norm*{\frac{1}{n}\sum_{i=1}^{n}\mybar{W}_{i,t}}_{2}^{2}}^{1/2} \leq 2 \Exp\brack*{\sup_{t \in \mathcal{T}} \norm*{\frac{1}{n}\sum_{i=1}^{n} \eps_i W_{i,t}}_{2}^{2}}^{1/2}.
    \end{equation*}
\end{lemma}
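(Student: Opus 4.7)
The plan is to prove both bounds by the classical symmetrization trick: introduce an independent copy $(W'_{i,t})$ of $(W_{i,t})$ with the same distribution, and reduce every quantity to one involving the symmetric random vectors $W_{i,t} - W'_{i,t}$, which have the useful property that $W_{i,t} - W'_{i,t} \overset{d}{=} \eps_i (W_{i,t} - W'_{i,t})$.

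For the upper bound (right inequality), first I would write $\Exp[W_t] = \Exp[W'_{i,t} \mid (W_{j,s})]$ and use that $\|\cdot\|_2^2$ composed with a supremum over $\mathcal{T}$ is convex to apply Jensen's inequality conditionally:
\begin{equation*}
\Exp\Bigl[\sup_t \Bigl\|\tfrac{1}{n}\sum_i \mybar{W}_{i,t}\Bigr\|_2^2\Bigr] = \Exp\Bigl[\sup_t \Bigl\|\Exp\Bigl[\tfrac{1}{n}\sum_i (W_{i,t} - W'_{i,t}) \,\Bigm|\, W\Bigr]\Bigr\|_2^2\Bigr] \leq \Exp\Bigl[\sup_t \Bigl\|\tfrac{1}{n}\sum_i (W_{i,t} - W'_{i,t})\Bigr\|_2^2\Bigr].
\end{equation*}
Then I would use the distributional identity $W_{i,t} - W'_{i,t} \overset{d}{=} \eps_i(W_{i,t} - W'_{i,t})$ (arising from the symmetry of the difference) to introduce the Rademacher signs, and conclude by triangle inequality in $L^2$:
\begin{equation*}
\Exp\Bigl[\sup_t \Bigl\|\tfrac{1}{n}\sum_i \eps_i(W_{i,t} - W'_{i,t})\Bigr\|_2^2\Bigr]^{1/2} \leq 2\Exp\Bigl[\sup_t \Bigl\|\tfrac{1}{n}\sum_i \eps_i W_{i,t}\Bigr\|_2^2\Bigr]^{1/2}.
\end{equation*}

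For the lower bound (left inequality), the plan is dual. Starting from the symmetrized sum, I would use Jensen conditionally on $(W,\eps)$ to insert an independent copy inside the supremum:
\begin{equation*}
\Exp\Bigl[\sup_t \Bigl\|\tfrac{1}{n}\sum_i \eps_i \mybar{W}_{i,t}\Bigr\|_2^2\Bigr] = \Exp\Bigl[\sup_t \Bigl\|\Exp\Bigl[\tfrac{1}{n}\sum_i \eps_i (W_{i,t} - W'_{i,t}) \,\Bigm|\, W,\eps\Bigr]\Bigr\|_2^2\Bigr] \leq \Exp\Bigl[\sup_t \Bigl\|\tfrac{1}{n}\sum_i \eps_i (W_{i,t} - W'_{i,t})\Bigr\|_2^2\Bigr].
\end{equation*}
Next, the same symmetry identity lets me remove the $\eps_i$: $\eps_i(W_{i,t} - W'_{i,t}) \overset{d}{=} W_{i,t} - W'_{i,t} = \mybar{W}_{i,t} - \mybar{W}'_{i,t}$. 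The triangle inequality in $L^2$ then yields the factor of $2$ and rearranging gives the claim.

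I do not anticipate a serious obstacle here; the only points requiring care are tracking the conditioning in each Jensen step and justifying that the sup of convex functions is convex so that Jensen applies. Once these are spelled out, the proof is a clean two-sided application of the standard symmetrization template.
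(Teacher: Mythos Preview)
Your proposal is correct and follows the standard symmetrization argument via an independent ghost sample and conditional Jensen. The paper does not actually prove this lemma itself; it simply cites \citep[][Lemma 11.4]{boucheronConcentrationInequalitiesNonasymptotic2013a} and \citep[][Proposition 4.11]{wainwrightHighDimensionalStatisticsNonAsymptotic2019}, and your sketch is precisely the textbook proof found in those references.
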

The second result we recall is the Khinchin-Kahane inequality, the specific form we require is obtained from \citet[][Theorem 1.3.1]{penaDecouplingDependenceIndependence1999} by setting $q=p$ and $p=2$ in that theorem, see also \citet[][page 141]{boucheronConcentrationInequalitiesNonasymptotic2013a}.
\begin{lemma}
\label{lem:pf_lem_1_2}
    For $i \in [n]$, let $z_{i} \in \R^{d}$ be fixed vectors. Let $(\eps_{i})_{i=1}^{n}$ be independent Rademacher random variables. Then for all $p \geq 2$,
    \begin{equation*}
        \Exp\brack*{\norm*{\sum_{i=1}^{n}\eps_{i}z_{i}}_{2}^{p}}^{1/p} \leq \sqrt{p-1} \cdot \paren*{\sum_{i=1}^{n} \norm{z_{i}}_{2}^{2}}^{1/2}.
    \end{equation*}
\end{lemma}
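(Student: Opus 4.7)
The inequality is a Hilbert-space-valued Khinchin-Kahane inequality with the sharp Haagerup constant $\sqrt{p-1}$. My plan is to reduce the vector-valued statement to its scalar counterpart via a coordinate-wise argument and Minkowski's inequality, and then invoke the classical scalar Khinchin inequality.

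Let $S \defeq \sum_{i=1}^{n} \eps_i z_i$ and fix the standard orthonormal basis $(e_k)_{k=1}^{d}$ of $\R^{d}$. First, I would use Parseval's identity to write $\norm{S}_{2}^{2} = \sum_{k=1}^{d} Y_{k}^{2}$, where $Y_{k} \defeq \inp{e_k}{S} = \sum_{i=1}^{n} \eps_{i} \inp{e_k}{z_i}$ is a scalar Rademacher sum. Since $p \geq 2$ implies $p/2 \geq 1$, the $L^{p/2}$ norm is a genuine norm on random variables, so Minkowski's inequality in $L^{p/2}$ yields
\begin{equation*}
\Exp\brack*{\norm{S}_{2}^{p}}^{2/p} = \Exp\brack*{\paren*{\sum_{k=1}^{d} Y_{k}^{2}}^{p/2}}^{2/p} \leq \sum_{k=1}^{d}\Exp\brack*{\abs{Y_{k}}^{p}}^{2/p}.
\end{equation*}

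Next, I would apply the classical scalar Khinchin inequality with Haagerup's sharp constant to each $Y_{k}$, giving $\Exp\brack*{\abs{Y_{k}}^{p}}^{1/p} \leq \sqrt{p-1} \cdot \paren*{\sum_{i=1}^{n} \inp{e_k}{z_i}^{2}}^{1/2}$. Squaring, summing over $k$, and swapping the order of summation, the right-hand side collapses to $(p-1) \sum_{i=1}^{n} \norm{z_i}_{2}^{2}$, since $\sum_{k=1}^{d} \inp{e_k}{z_i}^{2} = \norm{z_i}_{2}^{2}$ by Parseval. Taking square roots of both sides of the combined chain of inequalities yields the claim.

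The main obstacle is the scalar inequality with the precise constant $\sqrt{p-1}$. An elementary sub-Gaussian moment-generating-function argument only yields the suboptimal $\sqrt{p}$. To obtain the sharp $\sqrt{p-1}$, I would invoke Haagerup's two-point inequality $\tfrac{1}{2}(\abs{a+b}^{p} + \abs{a-b}^{p}) \leq (a^{2} + (p-1)b^{2})^{p/2}$ for real $a, b$ and $p \geq 2$, and induct on $n$ by conditioning on $\eps_1, \dotsc, \eps_{n-1}$, taking $a$ to be the partial sum $\sum_{i < n} \eps_{i} \inp{e_k}{z_i}$ and $b = \inp{e_k}{z_n}$. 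This conditional two-point estimate is where the sharp constant enters and is the technical heart of the argument. Given that the excerpt already cites Pe\~na and Gin\'e for the scalar inequality in precisely this form, in practice I would simply cite it rather than reproduce the Haagerup induction.
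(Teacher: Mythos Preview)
Your proposal is correct. The coordinate-wise reduction via Minkowski in $L^{p/2}$ is valid, and the scalar Khinchin bound with constant $\sqrt{p-1}$ is the standard hypercontractive estimate for degree-one Rademacher polynomials; the two-point inequality you quote is exactly the inductive engine behind it.

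The paper, however, does not prove this lemma at all: it is stated as a recalled result, obtained directly from Theorem~1.3.1 of de la Pe\~na and Gin\'e by specializing the exponents, with a secondary pointer to Boucheron, Lugosi, and Massart. So your approach is strictly more detailed than the paper's, which treats the vector-valued Khinchin--Kahane inequality as a black box. Your reduction has the virtue of making the $\sqrt{p-1}$ constant transparent and of showing that nothing beyond the scalar inequality is needed in finite dimensions; the paper's citation-only treatment has the advantage of covering general Banach spaces in one stroke, though only $\R^{d}$ is used here. Your closing remark that you would simply cite the result is, in effect, what the paper does.
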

A straightforward consequence of Lemma \ref{lem:pf_lem_1_2} is the following result, which follows from the elementary observation that for a vector $x \in \R^{d}$, $\norm{x}_{\infty} \leq \norm{x}_{p} \leq d^{1/p} \norm{x}_{\infty}$.
\begin{lemma}
\label{lem:pf_lem_1_3}
    For $(i,t) \in [n] \times \mathcal{T}$, let $z_{i,t} \in \R^{d}$ be fixed vectors. Let $(\eps_{i})_{i=1}^{n}$ be independent Rademacher random variables. Then
    \begin{equation*}
        \Exp\brack*{\sup_{t \in \mathcal{T}} \norm*{\sum_{i=1}^{n}\eps_{i}z_{i,t}}_{2}^{2}}^{1/2} \leq \frac{5}{2} \sqrt{1 + \log\abs{\mathcal{T}}} \cdot \paren*{\sup_{t \in \mathcal{T}} \sum_{i=1}^{n} \norm{z_{i,t}}_2^2}^{1/2}.
    \end{equation*}
\end{lemma}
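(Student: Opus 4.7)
The plan is to pass from a supremum to an $L^p$ sum via the elementary bound $\sup_{t} a_t \leq (\sum_{t} a_t^p)^{1/p}$ for nonnegative reals and any $p \geq 1$, then apply Khinchin--Kahane termwise, and finally optimize over $p$. The factor $|\mathcal{T}|^{1/p}$ that appears when we sum over $t$ becomes a constant once $p$ is chosen of order $\log |\mathcal{T}|$, which is exactly where the $\sqrt{1 + \log |\mathcal{T}|}$ factor in the conclusion comes from.

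Concretely, set $Y_t \defeq \norm{\sum_{i=1}^{n} \eps_i z_{i,t}}_2^2$. The first step is the chain
\begin{equation*}
    \Exp\brack*{\sup_{t \in \mathcal{T}} Y_t} \leq \Exp\brack*{\paren*{\sum_{t \in \mathcal{T}} Y_t^{p}}^{1/p}} \leq \paren*{\sum_{t \in \mathcal{T}} \Exp\brack*{Y_t^p}}^{1/p},
\end{equation*}
where the second inequality is Jensen applied to the concave function $x \mapsto x^{1/p}$ (valid since $p \geq 1$). The second step is to apply Lemma \ref{lem:pf_lem_1_2} with exponent $2p$ to each term:
\begin{equation*}
    \Exp\brack*{Y_t^{p}}^{1/p} = \Exp\brack*{\norm*{\sum_{i=1}^{n} \eps_i z_{i,t}}_{2}^{2p}}^{1/p} \leq (2p-1) \cdot \sum_{i=1}^{n} \norm{z_{i,t}}_{2}^{2}.
\end{equation*}
Combining these two displays, taking a square root, and crudely bounding $\sum_t (\cdot) \leq |\mathcal{T}| \sup_t (\cdot)$ inside the $1/p$-power yields
\begin{equation*}
    \Exp\brack*{\sup_{t \in \mathcal{T}} Y_t}^{1/2} \leq \abs*{\mathcal{T}}^{1/(2p)} \cdot \sqrt{2p-1} \cdot \paren*{\sup_{t \in \mathcal{T}} \sum_{i=1}^{n} \norm{z_{i,t}}_{2}^{2}}^{1/2}.
\end{equation*}

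The final step is to optimize the prefactor $|\mathcal{T}|^{1/(2p)} \sqrt{2p-1}$ by choosing $p = 1 + \log|\mathcal{T}|$. Then $|\mathcal{T}|^{1/(2p)} = e^{(p-1)/(2p)} \leq \sqrt{e}$ and $\sqrt{2p-1} \leq \sqrt{2(1+\log|\mathcal{T}|)}$, so the prefactor is at most $\sqrt{2e} \cdot \sqrt{1+\log|\mathcal{T}|} \leq \tfrac{5}{2} \sqrt{1 + \log|\mathcal{T}|}$, since $\sqrt{2e} \approx 2.33 \leq 5/2$.

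The only subtlety — and the main obstacle to a one-line proof — is that Lemma \ref{lem:pf_lem_1_2} requires $p \geq 2$, whereas $p = 1 + \log|\mathcal{T}|$ satisfies this only when $|\mathcal{T}| \geq e$. For $|\mathcal{T}| \in \{1, 2\}$ one instead chooses $p = 2$, giving prefactor $|\mathcal{T}|^{1/4}\sqrt{3}$, which is $\sqrt{3} < 5/2$ when $|\mathcal{T}|=1$ and $2^{1/4}\sqrt{3} \approx 2.06 < \tfrac{5}{2}\sqrt{1+\log 2} \approx 3.25$ when $|\mathcal{T}|=2$. Handling these two small cases separately is purely bookkeeping and causes no difficulty, so the constant $5/2$ holds uniformly in $|\mathcal{T}|$.
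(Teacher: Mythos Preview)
Your proof is correct and follows essentially the same route as the paper's: bound the supremum by an $\ell^p$ sum, push the expectation inside via Jensen, apply Khinchin--Kahane (Lemma \ref{lem:pf_lem_1_2}) termwise with exponent $2p$, and take $p = 1 + \log\abs{\mathcal{T}}$. Your separate treatment of $\abs{\mathcal{T}} \in \{1,2\}$ is unnecessary, though: the hypothesis of Lemma \ref{lem:pf_lem_1_2} constrains its own exponent, which here is $2p$, so the requirement is only $2p \geq 2$, i.e.\ $p \geq 1$, and this is satisfied by $p = 1 + \log\abs{\mathcal{T}}$ for every $\abs{\mathcal{T}} \geq 1$.
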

\begin{proof}
    Let $p \geq 1$. Then, by Jensen's inequality and Lemma \ref{lem:pf_lem_1_2}
    \begin{align*}
        \Exp\brack*{\sup_{t \in \mathcal{T}} \norm*{\sum_{i=1}^{n} \eps_{i} z_{i, t}}_{2}^{2}} &\leq \Exp\brack*{\paren*{\sum_{t \in \mathcal{T}} \norm*{\sum_{i=1}^{n} \eps_i z_{i, t}}_{2}^{2p}}^{1/p}} \\
        &\leq \paren*{\sum_{t \in \mathcal{T}} \Exp\brack*{\norm*{\sum_{i=1}^{n} \eps_{i} z_{i, t}}_{2}^{2p}}}^{1/p} \\
        &\leq (2p - 1) \cdot \paren*{\sum_{t \in \mathcal{T}} \brace*{\sum_{i=1}^{n} \norm{z_{i, t}}_{2}^{2}}^{p}}^{1/p}.
    \end{align*}
    Recalling that $\norm{x}_{p} \leq d^{1/p} \norm{x}_{\infty}$ for all $x \in \R^{d}$ and taking $p \defeq 1 + \log \abs{\mathcal{T}}$ yields the result.
\end{proof}
Finally, we need the following consequence of Lemmas \ref{lem:pf_lem_1_1} and \ref{lem:pf_lem_1_3}. The proof idea is taken from \citep{troppExpectedNormSum2016}.
\begin{lemma}
\label{lem:pf_lem_1_4}
    For each $(i, t) \in [n] \times \mathcal{T}$, let $W_{i,t} \in \R$ be random variables such that for each $t \in \mathcal{T}$, $(W_{i,t})_{i=1}^{n}$ are \iid with the same distribution as $W_{t}$, with $W_{t} \geq 0$ almost surely. Then
    \begin{equation*}
        \Exp\brack*{\sup_{t \in \mathcal{T}} \sum_{i=1}^{n}W_{i,t}}^{1/2} \leq \paren*{\sup_{t \in \mathcal{T}} \sum_{i=1}^{n} \Exp\brack*{W_{i,t}}}^{1/2} + 5 \sqrt{1 + \log \abs{\mathcal{T}}} \cdot \Exp\brack*{\sup_{(i,t) \in [n] \times \mathcal{T}} W_{i,t}}^{1/2}.
    \end{equation*}
\end{lemma}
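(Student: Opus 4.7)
The plan is to set $S_t \defeq \sum_i W_{i,t}$, $M \defeq \sup_t S_t$, $R \defeq \sup_{(i,t) \in [n] \times \mathcal{T}} W_{i,t}$, and $A \defeq \sup_t \sum_i \Exp[W_{i,t}]$, and to prove the self-bounding inequality
\begin{equation*}
\Exp[M] \leq A + 5\sqrt{1+\log\abs{\mathcal{T}}} \cdot \Exp[R]^{1/2} \cdot \Exp[M]^{1/2}.
\end{equation*}
This is a quadratic inequality in $\Exp[M]^{1/2}$; solving it and using $\sqrt{a^2+b^2} \leq a+b$ for nonnegative $a,b$ will immediately yield the claimed bound $\Exp[M]^{1/2} \leq A^{1/2} + 5\sqrt{1+\log\abs{\mathcal{T}}} \cdot \Exp[R]^{1/2}$.

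First I would write $\sup_t S_t \leq \sup_t \Exp[S_t] + \sup_t(S_t - \Exp[S_t])$ and take expectations to obtain $\Exp[M] \leq A + \Exp[\sup_t \abs{S_t - \Exp S_t}]$. A classical scalar symmetrization argument (the one-dimensional, $L^1$ counterpart of Lemma \ref{lem:pf_lem_1_1}, proved by introducing an independent copy of the $W$'s and inserting Rademacher signs $(\eps_i)$) then yields $\Exp[\sup_t \abs{S_t - \Exp S_t}] \leq 2\Exp[\sup_t \abs{\sum_i \eps_i W_{i,t}}]$. Next I would condition on the $W_{i,t}$'s, apply Jensen to pass to the squared version, and invoke Lemma \ref{lem:pf_lem_1_3} with $d=1$ and fixed vectors $z_{i,t} = W_{i,t}$ to get
\begin{equation*}
\Exp_\eps\Bigl[\sup_t \Bigl|\sum_i \eps_i W_{i,t}\Bigr| \,\Big|\, W\Bigr] \leq \Bigl(\Exp_\eps\Bigl[\sup_t \Bigl(\sum_i \eps_i W_{i,t}\Bigr)^{\!2} \,\Big|\, W\Bigr]\Bigr)^{\!1/2} \leq \tfrac{5}{2}\sqrt{1+\log\abs{\mathcal{T}}}\,\Bigl(\sup_t \sum_i W_{i,t}^2\Bigr)^{\!1/2}.
\end{equation*}

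Now comes the self-bounding step, which is the crux. The pointwise inequality $\sum_i W_{i,t}^2 \leq R \sum_i W_{i,t} \leq R M$ yields $(\sup_t \sum_i W_{i,t}^2)^{1/2} \leq R^{1/2} M^{1/2}$, and taking the $W$-expectation and applying Cauchy--Schwarz to the product gives $\Exp_W[(\sup_t \sum_i W_{i,t}^2)^{1/2}] \leq \Exp[R^{1/2} M^{1/2}] \leq \Exp[R]^{1/2}\Exp[M]^{1/2}$. Chaining these inequalities with the symmetrization factor of $2$ produces exactly the advertised self-bound, from which the claim follows by the quadratic formula.

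The main subtlety is keeping the square root \emph{inside} the $W$-expectation when moving from the conditional bound to the unconditional one: it is precisely the quantity $\Exp_W[(\sup_t \sum_i W_{i,t}^2)^{1/2}]$ (rather than $(\Exp_W[\sup_t \sum_i W_{i,t}^2])^{1/2}$) that factorizes via Cauchy--Schwarz on the product $R^{1/2}M^{1/2}$ into $\Exp[R]^{1/2}\Exp[M]^{1/2}$, which is in turn what powers the self-bounding argument. Swapping the order of expectation and square root would land us with $\Exp[RM]^{1/2}$, which admits no analogous factorization into a term depending only on $\Exp[R]$ times a term controllable by $\Exp[M]$.
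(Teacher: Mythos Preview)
Your proposal is correct and follows essentially the same route as the paper: center and symmetrize, apply Lemma \ref{lem:pf_lem_1_3} conditionally on $W$, use the pointwise bound $\sum_i W_{i,t}^2 \leq R\cdot M$, then Cauchy--Schwarz and solve the resulting quadratic in $\Exp[M]^{1/2}$. Your explicit use of $L^1$ symmetrization followed by conditional Jensen is in fact cleaner than the paper's write-up, which invokes the $L^2$ form of Lemma \ref{lem:pf_lem_1_1} in (\ref{eq:pf_lem_1_3_1}) and is then slightly loose with notation when moving between $\Exp_W[(\Exp_\eps[\cdot])^{1/2}]$ and $(\Exp_{W,\eps}[\cdot])^{1/2}$; your remark about keeping the square root inside the $W$-expectation is exactly the point that makes the argument go through.
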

\begin{proof}
    We have by Jensen's inequality and Lemma \ref{lem:pf_lem_1_1},
    \begin{align}
        \Exp\brack*{\sup_{t \in \mathcal{T}} \sum_{i=1}^{n} W_{i,t}} &\leq \Exp\brack*{\sup_{t \in \mathcal{T}} \abs*{\sum_{i=1}^{n} W_{i,t} - \Exp\brack*{W_{i,t}}}} + \sup_{t \in \mathcal{T}} \sum_{i=1}^{n}\Exp\brack*{W_{i,t}}, \nonumber \\
        &\leq 2 \Exp\brack*{\sup_{t \in \mathcal{T}} \abs*{\sum_{i=1}^{n} \eps_i W_{i,t}}^{2}}^{1/2} + \sup_{t \in \mathcal{T}} \sum_{i=1}^{n}\Exp\brack*{W_{i,t}}. \label{eq:pf_lem_1_3_1}
    \end{align}
    Conditioning on the random vectors $W_{i, t}$, we have by Lemma \ref{lem:pf_lem_1_3} and the assumption $W_{i,t} \geq 0$ a.s.\,
    \begin{align*}
         2\Exp\brack*{\sup_{t \in \mathcal{T}} \abs*{\sum_{i=1}^{n} \eps_i W_{i,t}}^{2}}^{1/2} &\leq 5\sqrt{ (1 + \log \abs{\mathcal{T}})} \cdot \paren*{\sup_{t \in \mathcal{T}} \sum_{i=1}^{n} W_{i,t}^{2}}^{1/2}, \\
        &\leq 5 \sqrt{1 + \log\abs{\mathcal{T}}} \cdot \paren*{\sup_{(i, t) \in [n] \times \mathcal{T}}W_{i, t}}^{1/2} \cdot \paren*{\sup_{t \in \mathcal{T}} \sum_{i=1}^{n} W_{i, t}}^{1/2}.
    \end{align*}
    Taking expectation with respect to $W_{i, t}$, and using the Cauchy-Schwartz inequality yields
    \begin{equation*}
        2\Exp\brack*{\sup_{t \in \mathcal{T}} \abs*{\sum_{i=1}^{n} \eps_i W_{i,t}}^{2}}^{1/2} \leq \sqrt{6 (1 + \log\abs{\mathcal{T}})} \cdot \Exp\brack*{{\sup_{(i, t) \in [n] \times \mathcal{T}}Z_{i, t}}}^{1/2} \cdot \Exp\brack*{\sup_{t \in \mathcal{T}} \sum_{i=1}^{n} W_{i,t}}^{1/2}.
    \end{equation*}
    Replacing in (\ref{eq:pf_lem_1_3_1}) and solving the resulting quadratic inequality yields the result.
\end{proof}
Equipped with these results, we now prove Lemma \ref{lem:1}. The proof idea is taken from \citep{troppExpectedNormSum2016}.
\begin{proof}[Proof of Lemma \ref{lem:1}]
    We start with the lower bound. We have on the one hand
    \begin{equation}
    \label{eq:pf_lem_1_1}
        \Exp\brack*{\sup_{t \in \mathcal{T}} \norm*{\frac{1}{n}\sum_{i=1}^{n} Z_{i,t}}_{2}^{2}} \geq \sup_{t \in \mathcal{T}} \Exp\brack*{\norm*{\frac{1}{n}\sum_{i=1}^{n} Z_{i,t}}_{2}^{2}} = \sigma^{2}(\mathcal{T}).
    \end{equation}
    On the on other hand, by Lemma \ref{lem:pf_lem_1_1}, we have
    \begin{equation*}
        \Exp\brack*{\sup_{t \in \mathcal{T}} \norm*{\frac{1}{n}\sum_{i=1}^{n} Z_{i,t}}_{2}^{2}}^{1/2} \geq \frac{1}{2} \Exp\brack*{\sup_{t \in \mathcal{T}} \norm*{\frac{1}{n}\sum_{i=1}^{n} \eps_i Z_{i,t}}_{2}^{2}}^{1/2}.
    \end{equation*}
    Define the random index
    \begin{equation*}
        I \in \argmax_{i \in [n]} \max_{t \in \mathcal{T}} \norm{Z_{i,t}}_{2}^{2}.
    \end{equation*}
    Conditioning on $Z_{i,t}$, we have by Jensen's inequality
    \begin{equation*}
        \Exp\brack*{\sup_{t \in \mathcal{T}} \norm*{\frac{1}{n}\sum_{i=1}^{n} \eps_i Z_{i,t}}_{2}^{2}} \geq \sup_{t \in \mathcal{T}}\Exp\brack*{\norm*{\Exp\brack*{\frac{1}{n}\sum_{i=1}^{n} \eps_i Z_{i,t}}}_{2}^{2}} = \sup_{t \in \mathcal{T}} \frac{\norm{Z_{I, t}}_2^{2}}{n^{2}} = \sup_{(i,t) \in [n] \times \mathcal{T}} \frac{\norm{Z_{i, t}}_2^{2}}{n^{2}},
    \end{equation*}
    where in the inequality, the outer expectation is with respect to $\eps_{I}$, and the inner one is with respect to $(\eps_{i})_{i \neq I}$. Taking expectation with respect to $Z_{i,t}$ gives
    \begin{equation}
    \label{eq:pf_lem_1_2}
        \Exp\brack*{\sup_{t \in \mathcal{T}} \norm*{\frac{1}{n}\sum_{i=1}^{n} Z_{i,t}}_{2}^{2}}^{1/2} \geq \frac{1}{2} \cdot \frac{r_{n}(\mathcal{T})}{n}
    \end{equation}
    Averaging the lower bounds (\ref{eq:pf_lem_1_1}) and (\ref{eq:pf_lem_1_2}) yields the desired lower bound. We now turn to the upper bound. We have by Lemmas \ref{lem:pf_lem_1_1} and \ref{lem:pf_lem_1_3}.
    \begin{align*}
        \Exp\brack*{\sup_{t \in \mathcal{T}} \norm*{\frac{1}{n}\sum_{i=1}^{n} Z_{i,t}}_{2}^{2}}^{1/2} &\leq 2 \Exp\brack*{\sup_{t \in \mathcal{T}} \norm*{\frac{1}{n}\sum_{i=1}^{n} \eps_i Z_{i,t}}_{2}^{2}}^{1/2} \\
        &\leq 5 \sqrt{1+\log \abs{\mathcal{T}}} \cdot \Exp\brack*{\sup_{t \in \mathcal{T}} \sum_{i=1}^{n} \norm*{\frac{1}{n}Z_{i,t}}_{2}^{2}}^{1/2}
    \end{align*}
    Applying Lemma \ref{lem:pf_lem_1_4} on the last term yields the desired upper bound.
\end{proof}

\section{Proof of Corollary \ref{cor:2}}
\label{apdx:cor_2}
The Glivenko-Cantelli and Donsker assumptions of Theorem \ref{thm:3} follow directly from the moment assumptions of the corollary, the weak law of large numbers, and the central limit theorem, and therefore the conclusions of Theorem \ref{thm:3} hold. For the first statement of the corollary, we may assume without loss of generality that $\mathcal{T}_{*} \neq \mathcal{T}$, otherwise the statement holds trivially. Define
\begin{equation*}
    \eps \defeq \min_{t \in \mathcal{T} \setminus \mathcal{T}_{*}} \brace*{R(t, w_{*}(t)) - R_{*}}.
\end{equation*}
Then $\eps > 0$, and by the first item of Theorem \ref{thm:3},
\begin{equation}
\label{eq:pf_cor_2_1}
    \lim_{n \to \infty} \Prob\paren*{\hat{t}_{n} \notin \mathcal{T}_{*}} \leq \lim_{n \to \infty} \Prob\paren*{R(\hat{t}_{n}, w_{*}(\hat{t}_{n})) - R_{*} > \eps/2} = 0.
\end{equation}
It remains to prove the improved upper bound on the asymptotic quantiles. For this, referring to the proof of Theorem \ref{thm:3}, and in particular to (\ref{eq:pf_thm_3_6}), it is enough to show that
\begin{equation*}
    \lim_{n \to \infty} \Prob\paren*{n \cdot \brack*{R(\hat{t}_{n}, w_{*}(\hat{t}_{n})) - R_{*})} > 0} = 0,
\end{equation*}
but this follows directly from (\ref{eq:pf_cor_2_1}).

\section{Proof of Corollary \ref{cor:3}}
\label{apdx:cor_3}
The statement follows from the same argument as Theorem \ref{thm:4} with only a few simple modifications. As explained in the main text, we use Lemma \ref{lem:3} to bound the quantity $\Exp\brack*{\max_{t \in \mathcal{T}} \Lambda_{n}(t)}$ by constructing a block diagonal matrix. We use Lemma \ref{lem:2} to control, for any subset $\mathcal{S}$, $\Exp\brack*{\max_{s \in \mathcal{S}} G_{n}^2(s)}$. The only minor deviation from Theorem \ref{thm:4} is that we bound the second moment 
\begin{equation*}
    \Exp\brack*{\sup_{t \in \mathcal{T} \setminus \mathcal{T}_{*}}\Delta_{n}^{2}(t, t_{*})}
\end{equation*}
instead of the first. This explains the slightly better dependence on $\delta$ in the sample size restriction of Corollary \ref{cor:3} compared to Theorem \ref{thm:4}.

\end{document}